
\documentclass{article}

\usepackage{microtype}
\usepackage{graphicx}
\usepackage{subcaption}
\usepackage{booktabs} 
\usepackage{multirow}
\usepackage{graphicx}
\usepackage{caption}
\usepackage{subcaption}
\usepackage{amsmath}
\usepackage{amssymb}
\usepackage{mathtools}
\usepackage{amsthm}
\usepackage{listings}
\usepackage[dvipsnames]{xcolor}

\usepackage{hyperref}



\usepackage[preprint]{icml2026}


\usepackage[capitalize,noabbrev]{cleveref}

\theoremstyle{plain}
\newtheorem{theorem}{Theorem}[section]

\newtheorem{lemma}[theorem]{Lemma}

\theoremstyle{definition}
\newtheorem{definition}[theorem]{Definition}

\theoremstyle{remark}

\lstset{
  basicstyle=\ttfamily\small,
  frame=single,        
  framerule=0.5pt,    
  aboveskip=\baselineskip,
  belowskip=\baselineskip,
}

\usepackage[textsize=tiny]{todonotes}

\icmltitlerunning{On the Runway Cascade of Transformers for Language Modeling}

\begin{document}

\twocolumn[
  \icmltitle{On the Runway Cascade of Transformers for Language Modeling}



  \icmlsetsymbol{equal}{*}

  \begin{icmlauthorlist}
    \icmlauthor{Hunjae Lee}{sch}
    \icmlauthor{Corey Clark}{sch}
  \end{icmlauthorlist}

  \icmlaffiliation{sch}{Department of Computer Science, Southern Methodist University, Dallas TX USA}

  \icmlcorrespondingauthor{Hunjae Lee}{hunjael@smu.edu}

  \icmlkeywords{LLM, transformers, graph rewiring}

  \vskip 0.3in
]



\printAffiliationsAndNotice{}  

\begin{abstract}
    In decoder-only (causal) transformers, the computation graph created by causal masking routes information through both direct-path attention and indirect paths formed by intermediate tokens. We denote these indirect paths between token pairs as their \textit{runways}. We argue that certain failure modes of causal transformers as observed by a growing body of recent works are likely exacerbated by a misalignment between these two information propagation modes. We formalize \textit{runway cascade} as a phenomenon whereby this misalignment results in redundancies and irrelevant information cascading to token representations despite adequately learned attention patterns. As a solution, we propose \textit{runway-aware rewiring} as a more explicit way of incorporating runway context directly into each token's direct-path attention. This mechanism re-wires the attention pattern for each token based on a summary of its runway landscape, enabling awareness of accumulating representational influences and allowing for more balanced information propagation. Our proposed methodology introduces no additional parameters and can seamlessly be integrated into standard attention mechanism. Empirically, our rewired transformer results in steady improvements in general language modeling as well as noticeably stronger information retrieval and extrapolation abilities compared to standard transformers.
\end{abstract}

\section{Introduction}
Causal, decoder-only, transformers make up a majority of modern LLM architectures \cite{achiam2023gpt,comanici2025gemini}. Despite their tremendous success, they have been observed to exhibit failure points and unexpected phenomena. Some of these instances include lost-in-the-middle phenomenon \cite{liu2024lost} on retrieval tasks, last-token representation collapse \cite{barbero2024transformers}, the emergence of position bias favoring early positions in the sequence \cite{wu2025on}, and the development of attention sinks \cite{xiao2024efficient,gu2025when}. Viewing transformers as an instance of graph neural networks (GNNs) \cite{joshi2025transformers}, many of these phenomena can be contextualized through information propagation theory of GNNs, a field with a rich literature studying representational and topological bottlenecks. Namely, theory of over-squashing and over-smoothing in GNNs have been instrumental in fostering a unified perspective on these seemingly disconnected failure modes of transformers \cite{wu2024role,barbero2024transformers,arroyo2025bridging}. 

At the heart of these failure modes lie the computation graph structure created by causal masking in decoder-only transformers. This causal graph has been shown to create position bias that, when taken to the limit, make early tokens in the sequence dominate the representation landscape \cite{wu2025on,barbero2025why}. Likewise, the causal graph structure is also responsible for creating last-token representation collapse, whereby certain distinct sequences become arbitrarily similar in their last token representation \cite{barbero2024transformers}.

In causal attention, tokens influence subsequent positions in two ways: (1) through direct connections and (2) indirectly by influencing intermediate tokens along the \textit{runway}. We define the runway of two tokens as the collection of indirect paths between them made up of intermediate tokens. To study their influence on information propagation, we formulate a GNN-theory style bound for causal transformers and show that their representational sensitivity is controlled by the given destination token's own attention pattern (direct influence) acting as a gate, composing the mixture of accumulated attention patterns along its runway (indirect influences). Building on top of the effects of position bias studied by recent works \cite{barbero2025why,wu2025on}, we argue that a misalignment between these direct and indirect representational influences can exacerbate information propagation issues in causal transformers.

In this work, we introduce \textit{runway cascade} as a phenomenon whereby direct-path attention no longer adequately controls the indirect influences and thereby resulting in cascade of redundancies and irrelevant information on token representations. We argue that this stems from standard attention mechanism remaining unaware of accumulating runway influences and show that it can occur even when the attention patterns are adequately learned. As such, we propose a more explicit way of incorporating runway context directly into row-wise attention calculation. We introduce a runway-aware rewiring technique that re-wires the attention pattern for each token based on a computed \textit{runway-coefficient}. This allows each token's attention pattern to discern accumulated representational influences along the runway and to re-weight the attention scores accordingly for more balanced information propagation. Our proposed solution introduces no additional parameters and can be seamlessly integrated with standard attention.

\section{Background and Related Works}
\subsection{Interpreting Causal Transformers as a GNN}
In order to study transformers in part through the lens of GNN theory, we first make the connection that transformers can be seen as an instance of GNNs \cite{joshi2025transformers}. Extending from \cite{joshi2025transformers} which constrained its analysis to fully-connected, non-causal attention, we interpret causal transformers as an attention-based message-passing GNN as:

\begin{align}
    h^{(l+1)}_i = \phi\left(h^{(l)}_i, \sum_{j \le i} A_{ij}\psi(h^{(l)}_i, h^{(l)}_j)\right)
\end{align}

where the aggregate function $\psi$ represents the attention-based weighted sum of tokens dictated by lower-triangular and row-stochastic adjacency operator $A$ as: 

\begin{align}
    \psi(h^{(l)}_i, h^{(l)}_j) = h^{(l)}_jW_V\\
    A_{ij} = \text{SoftMax}(h^{(l)}_iW_Q \cdot h^{(l)}_jW_K)
\end{align}

and the update function $\phi$ represents intra-token evolution to the next layer as follows:

\begin{align}
    \phi(h^{(l)}_i, \bar{h}^{(l)}_i) = \text{MLP}(h^{(l)}_i + \bar{h}^{(l)}_i)\\
    \bar{h}^{(l)}_i = \sum_{j \le i} A_{ij}\psi(h^{(l)}_i, h^{(l)}_j)
\end{align}

By interpreting causal transformers as an instance of a message-passing GNN, information propagation patterns in causal transformers can be studied under a more graph-theoretic lens.

\subsection{Information Propagation Theory}
Analyzing failure modes and unexpected phenomena of transformer-based language models through the lens of information propagation theory is a recent but increasingly favored approach. In particular, theory of over-squashing in GNNs has been helpful in understanding related issues in transformers.

\paragraph{Over-squashing in GNNs.}
Over-squashing \cite{alon2021on,topping2022understanding,di2023over} is a phenomenon in GNNs where exponentially growing receptive field of nodes result in over-compression of information into fixed-sized node representations, inevitably causing distortion or even loss of certain information. This has been observed as a consequence of long-range information propagation \cite{alon2021on} but also of graph topology \cite{topping2022understanding,black2023understanding,giovanni2024how}. In particular, \cite{giovanni2024how} interprets over-squashing as a result of misalignment between the graph topology and the task at hand. Many issues of transformers arising from the causal graph can be interpreted under this framework, where the causal graph creates misalignment between its topology and the task at hand despite being a practical tool for handling next-token prediction schemes. Over-squashing in GNNs is known as representation collapse\footnote{We use over-squashing and representation collapse interchangeably in our work} in transformers, a phenomenon that has been shown to disproportionately affect the last token in the sequence \cite{barbero2024transformers}.


\paragraph{Graph Rewiring.}
A well-studied methodology to alleviate information propagation issues and to better align the graph topology to the task at hand is graph rewiring. There's broadly two different objectives: spatial rewiring \cite{abboud2022shortest,bruel2022rewiring}, and spectral rewiring \cite{karhadkar2023fosr,jamadandi2024spectral}. Many works also optimize for more specific quantities, such as effective resistance \cite{black2023understanding}, commute time \cite{giovanni2024how,sterner2024commute}, and the spectral gap of the graph Laplacian \cite{deac2022expander,wilson2024cayley}.

Graph rewiring approaches have also been applied to transformers. For the most part, they rewire the computation graph for sparsity and computational efficiency \cite{shirzad2023exphormer,shirzad2024even} and not for topological improvements. Under this interpretation, sparse transformers \cite{beltagy2020longformer,zaheer2020big} in general can be seen as ways of rewiring the attention pattern to achieve computational efficiency. 

A majority of graph rewiring approaches (including aforementioned works) opt for \textit{hard-rewiring}, whereby edges are added or deleted to change the connectivity of the graph in a binary manner. In our proposed methodology, however, we opt for a \textit{soft-rewiring} approach, changing attention patterns based on the rewiring criteria without changing the graph topology itself. We do this to maintain causality and to make the comparison against standard causal transformers as one-to-one as possible.

\subsection{Position Bias and Information Propagation Issues in Transformers}
While being a practical and effective solution for enforcing causality, the application of causal masking in the shape of an upper-triangular mask creates interesting, and potentially unexpected phenomena for transformers in practice. Namely, the causal graph of the transformer, now directed and bipartite, creates a position bias that favors early positions \cite{wu2025on}. Intuitively, the earlier the position in the sequence, the more future tokens attend back to it, giving it more opportunities to influence other tokens to preserve and refine its own information. This has practical consequences, with works showing asymmetric performance degradation based on position dependencies or order of examples \cite{lu2022fantastically,hou2024large,fang2025rethinking}. Position bias is also related to the emergence of attention sinks \cite{xiao2024efficient,gu2025when,barbero2025why}, a phenomenon where the first few tokens of the sequence get disproportionately high attention weights. In addition, position bias of causal transformers can induce over-squashing or representation collapse of the last token in certain distinct sequences, as illustrated by \cite{barbero2024transformers}.

While some recent works have begun relating these emergent properties of causal graphs in transformers, they typically focus on their influence on absolute positions in the sequence. For example, \cite{wu2025on,barbero2025why} focus on the role of the early positions in the sequence while \cite{barbero2024transformers} studies the effects of the causal graph on the final token in the sequence. As such, a generalized understanding on how relative distances between arbitrary token pairs affect information propagation has not been made explicit. Likewise, practical solutions that address these information propagation issues created by the geometry of the causal graph remain limited.

\paragraph{On Positional Encodings.}
The study of position bias is related, but distinct, from the study of positional encodings (PEs) \cite{kazemnejad2023the,su2024roformer,barbero2025round}. Intuitively, while PEs are a study of how order is represented in attention mechanisms, position bias is a study of how the given architecture processes order. The latter reveals emergent topological bottlenecks hindering information propagation, and how they can override inductive biases meant to be provided by the former.

\section{Runway Cascade}

\subsection{Information Flow in Transformers}\label{sec:runway_info_flow}
In causal attention, information from a given source token $h_s$ flows to its destination token $h_d$ in two ways (for arbitrary source and destination tokens $s \le d$ in the sequence). First, \textit{direct connections} allow $h_s$ to reach $h_d$ directly in one-hop connection. The strength of such direct connection is controlled by the destination token's own row-wise attention pattern. Second, \textit{indirect connections} between $h_s$ and $h_d$ allow $h_s$ to influence intermediate tokens $h_j$ (where $s < j < d$) to propagate its own information along the indirect paths to $h_d$. We denote the collection of all indirect paths between $h_s$ and $h_d$ as their \textit{runway}. 

\begin{definition}[Runway]
    Let $\mathcal{G} = (V, E)$ be a directed, bipartite graph where $V$ denotes vertices (nodes) and $E$ denotes edges. A path from nodes $s$ and $d$ is a vertex sequence $p = (v_0,...,v_l)$ with $v_0 = s$ and $v_l = d$, s.t. $(v_{i-1}, v_i) \in E$ for all $i$ and distinct vertices. An indirect path is any path where the path length $l \ge 2$, or equivalently if the internal vertex set $\{v_1,...,v_{l-1}\} \neq \varnothing$. Let $P_{\mathcal{G}}(s,d)$ be the set of all paths between $s$ and $d$. Then, the runway of nodes $s$ and $d$ is defined as the set $\mathcal{R}_{ds}$ where:
    \begin{equation}
        \mathcal{R}_{ds} = \{(v_0,...,v_l) \in P_{G}(s,d) : l \ge 2\}\nonumber
    \end{equation}
\end{definition}

\begin{figure}[htbp] 
  \centering
  \includegraphics[width=0.8\linewidth]{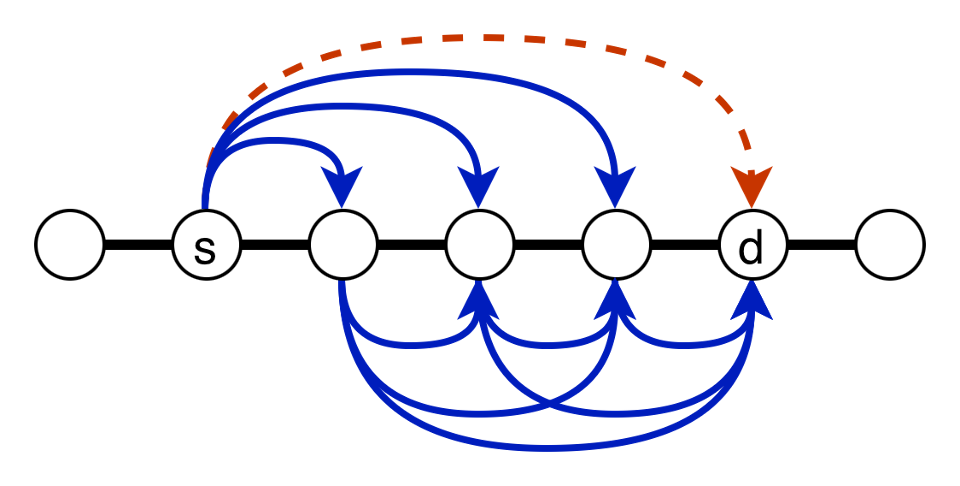}
  \caption{Visualizing two types of paths between token $s$ and $d$ along the causal graph: (1) direct path (in dotted red) and (2) indirect paths (in solid blue).}
  \label{fig:direct_vs_indrect}
\end{figure}

\Cref{fig:direct_vs_indrect} illustrates the runway $\mathcal{R}_{ds}$ of tokens $h_s$ and $h_d$ in solid blue paths while denoting the direct path in dotted red. While information of token $h_s$ has many paths flowing into $h_d$ (red + blue paths), only its direct path (in dotted red) is controlled explicitly by $h_d$ with respect to $h_s$ in isolation. On the other hand, accumulated information of $h_s$ in the runway (propagated along solid blue paths) are absorbed into intermediate tokens. While these intermediate tokens ultimately influence $h_d$'s representation through their own direct connections, $h_d$ can only attend to them without being able to isolate the contribution of $h_s$. In addition, though the intermediate tokens do have some control over whether to absorb information from $h_s$ or not in their direct connections to $h_s$, the continuous nature of the softmax function in the attention mechanism prevents them from ignoring $h_s$ entirely \cite{wu2023demystifying,wu2025on}.

\begin{lemma}[Lemma 2 from \citet{wu2023demystifying}]\label[lemma]{lem:softmax_wu}
    Assume that the attention function is continuous and the point-wise nonlinear activation function $\sigma(\cdot)$ satisfies $0 \le \frac{\sigma(x)}{x} \le 1$ for $x \neq 0$ and $\sigma(0) = 0$. Then, there exists $\epsilon > 0$ s.t. for all $t \in \mathbb{N}_{\ge 0}$ and for any $(i,j) \in E$, we have $a^{(t)}_{ij} > \epsilon$.
\end{lemma}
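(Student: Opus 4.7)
The plan is to reduce the claim to a uniform boundedness argument on the pre-softmax logits, and then invoke strict positivity of the softmax. First I would fix an arbitrary edge $(i,j)\in E$ and write the attention weight as
\begin{equation}
a^{(t)}_{ij} = \frac{\exp(q^{(t)}_i \cdot k^{(t)}_j)}{\sum_{k:(i,k)\in E}\exp(q^{(t)}_i \cdot k^{(t)}_k)}\nonumber
\end{equation}
so that if the dot products $q^{(t)}_i \cdot k^{(t)}_k$ lie in a compact interval $[-M,M]$ uniformly in $t$ and $k$, then $a^{(t)}_{ij} \ge \exp(-2M)/|N(i)|$, giving the desired $\epsilon>0$. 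Thus the whole lemma reduces to showing that token representations, and hence the query/key projections of them, stay uniformly bounded across layers.

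The main step is therefore a norm-propagation argument that leverages the activation hypothesis. I would proceed by induction on $t$: assume $\|h^{(t)}_i\|\le C$ for all $i$, and show $\|h^{(t+1)}_i\|\le C$. The aggregated message $\bar{h}^{(t)}_i = \sum_j a^{(t)}_{ij}\psi(h^{(t)}_i,h^{(t)}_j)$ is a convex combination (since $A$ is row-stochastic) of value-projected tokens, so its norm is bounded by $\|W_V\|_{op}\cdot C$. The pointwise condition $0\le\sigma(x)/x\le 1$ with $\sigma(0)=0$ implies $|\sigma(x)|\le|x|$ coordinatewise, so applying the nonlinearity in the MLP does not inflate norms beyond the weight-matrix scaling. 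Choosing $C$ large enough (depending on the fixed parameter matrices but not on $t$), I would close the induction, obtaining a uniform bound on all $h^{(t)}_i$, and therefore on all dot products $q^{(t)}_i\cdot k^{(t)}_j$ through continuity of the attention function.

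Given such a uniform logit bound $M$, the softmax lower bound $\epsilon := \exp(-2M)/N_{\max}$, with $N_{\max}$ the maximum in-degree of $\mathcal{G}$, holds for every $t$ and every existing edge. Because $\epsilon$ depends only on model parameters and on the graph, not on $t$ or on the particular edge, the conclusion of the lemma follows.

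The main obstacle I anticipate is making the norm-propagation step genuinely uniform in $t$ without smuggling in extra assumptions on the MLP or on layer normalization. In particular, the hypothesis $\sigma(x)/x\le 1$ controls the elementwise activation but not the linear layers of the MLP, so either the proof must treat $\|W\|_{op}$ as absorbed into the constant $M$ a priori, or one has to argue (as in the original Wu et al. setup) that the combined layer map is a contraction on a sufficiently large ball. I would handle this by stating the bound in terms of the operator norms of the fixed parameter matrices and noting that the crucial qualitative fact, namely the existence of \emph{some} $\epsilon>0$, survives regardless of its numerical value.
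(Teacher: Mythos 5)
The paper does not prove this statement; it is imported verbatim as ``Lemma~2 from Wu~et~al.~(2023)'' and used as a black box, so there is no in-paper proof to compare your argument against. Judged on its own terms, your reduction -- a uniform bound $|q_i^{(t)}\cdot k_j^{(t)}|\le M$ on the pre-softmax logits implies $a^{(t)}_{ij}\ge \exp(-2M)/N_{\max}$ -- is exactly the right skeleton and matches how this kind of result is standardly established.

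However, the gap you flag at the end is genuine, and your attempt to wave it away is where the proof fails. The lemma's quantifier order is ``there exists $\epsilon>0$ such that \emph{for all} $t$,'' so $\epsilon$ must be uniform over all layers. Your induction step
\[
\|h^{(t+1)}_i\| \;\le\; \|W\|_{\mathrm{op}}\cdot\max_j \|h^{(t)}_j\|
\]
only closes when the composite layer map is non-expansive. The hypotheses you actually have -- row-stochasticity of $A$ (so the aggregation is a convex mixture and contracts in $\|\cdot\|_\infty$) and $0\le\sigma(x)/x\le1$ with $\sigma(0)=0$ (so the activation is $1$-Lipschitz at the origin in each coordinate) -- control the mixing and the nonlinearity, but say nothing about the linear maps $W_Q,W_K,W_V$ or the MLP. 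If $\|W\|_{\mathrm{op}}>1$, the norms of $h^{(t)}_i$ may grow geometrically in $t$, hence $M=M(t)\to\infty$, hence $\epsilon(t)=\exp(-2M(t))/N_{\max}\to 0$, and \emph{no} single $\epsilon>0$ works for all $t$. This is not a matter of the constant being unknown but finite; it is a quantifier failure. Your closing sentence (``the existence of some $\epsilon>0$ survives regardless of its numerical value'') is therefore incorrect without an additional assumption.

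What is missing is precisely the ingredient Wu~et~al.\ supply before proving their Lemma~2: a uniform bound on the node features, which in their framework rests on an explicit control of the weight matrices (e.g.\ a bounded product $\prod_t\|W^{(t)}\|$, or an architecture in which the layer map is non-expansive by construction). You should either state that additional assumption explicitly and cite the corresponding preliminary boundedness lemma, or restrict $t$ to the finite depth $T$ of a fixed network (in which case a trivial $\epsilon=\min_{t<T,\,(i,j)\in E} a^{(t)}_{ij}>0$ suffices and the whole norm-propagation argument can be discarded). As written, your proof proves strict positivity of each $a^{(t)}_{ij}$ but not a lower bound uniform in $t$, which is the entire content of the lemma.
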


\Cref{lem:softmax_wu} states that attention mechanism cannot disconnect any valid edge between nodes/tokens given the continuous nature of the softmax function and that each edge $(i,j)$ must be assigned a non-zero attention weight $a^{(t)}_{ij} > \epsilon, \; \epsilon > 0$. This means that information of $h_s$ accumulates \textit{inevitably} along the runway to $h_d$. Position bias exacerbates this effect, giving $h_s$ more opportunities to preserve and refine its information the earlier its position in the sequence. Because the only path through which $h_d$ can control $h_s$ in isolation is through its direct path, it has no explicit control over how much of $h_s$ flows to $h_d$ along the runway through intermediate tokens. As discussed in subsequent sections, we argue that a \textit{misalignment} between direct-path attention and runway accumulated information can result in redundant or semantically irrelevant information to cascade to tokens even if the attention pattern is adequately learned.

\subsection{Runway Induced Representation Bottlenecks}\label{sec:runway_bottlenecks}
The effects of the runway in causal graphs as discussed in \Cref{sec:runway_info_flow} can be formalized using the Jacobian of node/token representations. The Jacobian of node representations is a measure of its sensitivity with respect to a given node, and is often used in GNN literature to upper-bound and analyze representation bottlenecks and over-squashing effects.

\begin{theorem}\label{thm:osq_layers}
    Assume a source token $h_s$ at layer $L\ge 0$ and destination token $h_d$ at $r \ge 0$ layers apart and on arbitrary positions $s \le d$. In addition, let $C >0$ be some constant and $A$ be lower-triangular, row-stochastic adjacency operator. Then, the representational sensitivity of $h_d$ with respect to $h_s$ can be bounded as:

    \begin{equation}
        \left\|\frac{\partial h^{(L+r)}_d}{\partial h^{(L)}_s}\right\| \le C^{r}(\prod^{r-1}_{t=0}(I + A^{(L+t)}))_{ds}\nonumber
    \end{equation}
\end{theorem}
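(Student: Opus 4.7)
The plan is to follow the standard Jacobian-based over-squashing argument of Topping et al.\ and Di Giovanni et al., but specialized to the causal message-passing form given in equations (1)--(5) of the excerpt. Concretely, I would bound the one-layer sensitivity and then iterate by chain rule across the $r$ layers separating $h^{(L)}_s$ and $h^{(L+r)}_d$, which naturally produces the claimed matrix product.

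First I would establish a single-layer building block: for any $i,j$,
\begin{equation}
\left\|\frac{\partial h^{(l+1)}_i}{\partial h^{(l)}_j}\right\| \le c\,(I + A^{(l)})_{ij},\nonumber
\end{equation}
for some constant $c>0$. The self-loop term $I_{ij}$ arises from the residual-like slot $h^{(l)}_i$ fed directly into $\phi$ in equation (4), and the $A^{(l)}_{ij}$ term arises from the aggregation $\sum_{j\le i} A_{ij}\psi(h^{(l)}_i,h^{(l)}_j)$. I would use standard Lipschitz assumptions on $\phi$ (the MLP plus residual) and $\psi$ (the linear map $W_V$) to absorb their Jacobian norms and $\|W_V\|$ into $c$. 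Because the sum in the aggregation only runs over $j\le i$, the resulting matrix is lower-triangular, consistent with $A^{(l)}$ being the lower-triangular, row-stochastic adjacency operator introduced just before the statement.

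Next I would iterate. Writing $h^{(L+r)}_d$ as a composition of $r$ layer updates and applying the chain rule,
\begin{equation}
\frac{\partial h^{(L+r)}_d}{\partial h^{(L)}_s} = \sum_{v_1,\dots,v_{r-1}} \prod_{t=0}^{r-1}\frac{\partial h^{(L+t+1)}_{v_{t+1}}}{\partial h^{(L+t)}_{v_t}},\nonumber
\end{equation}
with $v_0=s$ and $v_r=d$. Taking norms, using the triangle inequality, and applying the single-layer bound factor-by-factor yields
\begin{equation}
\left\|\frac{\partial h^{(L+r)}_d}{\partial h^{(L)}_s}\right\| \le c^{r}\sum_{v_1,\dots,v_{r-1}}\prod_{t=0}^{r-1}(I+A^{(L+t)})_{v_{t+1}v_t},\nonumber
\end{equation}
and the inner sum is exactly the $(d,s)$-entry of the matrix product $\prod_{t=0}^{r-1}(I+A^{(L+t)})$. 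Setting $C=c$ gives the theorem. This is also where the ``runway'' interpretation surfaces: expanding the product, the $(d,s)$-entry collects every length-$k$ walk from $s$ to $d$ (for $1\le k\le r$) weighted by the product of attention mass along it, i.e.\ the direct path plus every indirect path in $\mathcal{R}_{ds}$.

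The main obstacle is handling the dependence of $A^{(l)}$ on the hidden states themselves: differentiating the softmax in equation (3) strictly introduces extra terms beyond the ``attention-as-fixed-weights'' picture. I would address this in the spirit of prior GNN-style over-squashing bounds by treating the attention Jacobian as bounded under the same Lipschitz/boundedness assumptions on $W_Q, W_K$ and the token norms, so that its contribution is folded into the constant $c$, leaving the combinatorial walk structure intact. A secondary care-point is to ensure that the causal constraint $j\le i$ is preserved at every layer, which follows automatically because each $I+A^{(l)}$ is lower-triangular, so their product is as well and the entry $(d,s)$ with $s\le d$ is well-defined and nonnegative.
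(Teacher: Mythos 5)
Your proposal is essentially the same argument as the paper's. The paper proves the bound by induction on $r$: the base case is $r=0$, and the inductive step applies the chain rule through one layer of the update (residual slot $\partial_1\phi$ gives the $I$ contribution, the aggregation slot $\partial_2\phi\,\sum_w A_{dw}\,\partial\psi$ gives the $A$ contribution), folds the Lipschitz bounds on $\phi,\psi$ into $C$, and recognizes the result as the $(d,s)$-entry of $(I+A^{(L+r)})\prod_{t=0}^{r-1}(I+A^{(L+t)})$. Your unrolled path-sum is an equivalent presentation of the same iteration.

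The one place where your plan is softer than the paper is the treatment of the softmax's dependence on the hidden states. You propose to absorb the attention Jacobian into the constant $c$ ``leaving the combinatorial walk structure intact,'' but differentiating $A^{(l)}_{ij}$ with respect to $h^{(l)}_s$ produces a term $\sum_{j\le i}(\partial A^{(l)}_{ij}/\partial h^{(l)}_s)\,h^{(l)}_jW_V$ that is not proportional to $(I+A^{(l)})_{is}$ and can route sensitivity to $s$ through the logits of all $j\le i$, so it does not fold neatly into a scalar constant without changing the matrix structure of the bound. The paper sidesteps this entirely by explicitly making the simplifying assumption that the attention weights are treated as independent of $h$ (following the convention it cites), so if you adopt that same assumption your derivation goes through cleanly; you should state it rather than claim the constant absorbs the term.
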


The full theorem and its proof are provided in the Appendix (\Cref{sec:osq_for_trans}).

\Cref{thm:osq_layers} states that representation sensitivity of token $h_d$ with respect to $h_s$ can be upper-bounded by the accumulation of attention patterns $A_{ds}$ between $h_d$ and $h_s$ across layers with the residual stream $I$ reinforcing self-representations. This is similar to findings from Theorem 5.1 \cite{barbero2024transformers} with some key distinctions for interpretive and generalization purposes. Namely, our findings are both position and layer agnostic and describes representation sensitivities of tokens at arbitrary positions at arbitrary layers. This is in contrast to Theorem 5.1 in \cite{barbero2024transformers} which only studied over-squashing effects of the final token in the sequence at the final layer with respect to tokens at the input layer. While these are largely generalization improvements, our \Cref{thm:osq_layers} does indicate that runway influences impact representation learning throughout the sequence of tokens across layers, not just the final token at the final layer. 

This also suggests that tokens in the middle of the sequence may be most susceptible to being used as information propagators. The longer the sequential distance between source and destination tokens, the longer their runway and the more intermediate tokens (towards  the middle of the sequence) get involved. This may lead to tokens toward the middle of the sequence being \textit{over-used} as runways for various token pairs at either ends of the sequence, perhaps even at the cost of losing their own representation sharpness. This aligns well with the observed lost-in-the-middle effects \cite{liu2024lost} where performance tends to degrade in retrieval tasks (where representation sharpness is particularly important) towards the middle of the sequence and show a U-shaped performance curve \cite{barbero2024transformers}.

In addition, \Cref{thm:osq_layers} is constructed such that the contribution of the direct path can be separated from the accumulating influence of the runway. Namely, the RHS of \Cref{thm:osq_layers} can be expanded as:

\begin{align}
    \left\|\frac{\partial h^{(L+r)}_d}{\partial h^{(L)}_s}\right\| &\le C^{r+1}\left(\left(\prod^{r-2}_{t=0}(I+A^{(L+t)})\right)_{ds}\right. \\
    &\left.+ \sum_{w \in \mathcal{N}(d)}A^{(L+r-1)}_{dw}\left(\prod^{r-2}_{t=0}(I+A^{(L+t)})\right)_{ws}\right)\nonumber
\end{align}

where the first term inside the main paranthesis $\left(\prod^{r-1}_{t=0}(I+A^{(L+t)})\right)_{ds}$ is the self-token update term $\partial \phi_1$ and the second term shows the structure of cross-token mixing, isolated as:

\begin{align}
    \sum_{w \in \mathcal{N}(d)}A^{(L+r-1)}_{dw}\left(\prod^{r-2}_{t=0}(I+A^{(L+t)})\right)_{ws}
\end{align}

Here, $A^{(L+r-1)}_{dw}$ denotes the direct-path attention pattern for $h_d$ (over its immediate neighbors $w \in \mathcal{N}(d)$) and $\left(\prod^{r-2}_{t=0}(I+A^{(L+t)})\right)_{ws}$ denotes the accumulating influence of the runway. This shows that the destination token's own attention pattern (over its row) acts as a gate that chooses a mixture of indirectly propagated information along its runway. However, $A_{dw}$ is only conditioned on the individual neighbor tokens $h_w, \; \forall w \in \mathcal{N}(d)$ without explicit access to the global landscape of the runway, which can lead to $A_{dw}$ being unaware of global cascade of redundant or semantically uninformative information.

\begin{lemma}[Blindspots for standard attention mechanism]\label[lemma]{lem:softmax_blindspot}
Let token representation $h_w = \widetilde{h}_w + \delta$ where $\widetilde{h}_w$ is (unique) token representation that differentiates $h_w$ and $\delta$ denotes the runway influences on $h_w$.  Assume also that $\delta$ can be further decomposed into $\delta = \delta_c + \delta_r$ where $\delta_c$ is the row-wise perturbation (common mode influence) and $\delta_r$ is the residual deviation from row-wise perturbation. Then, attention weights cannot discern runway influences except through (small) residual deviations as:
\begin{align}
    \left\|a_{d} - \widetilde{a}_{d}\right\| \le \sigma_0P\left\|\delta_r\right\|\nonumber
\end{align}
\end{lemma}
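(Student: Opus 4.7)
The plan is to write out the scaled dot-product attention logits $z_{dw} = \langle q_d, k_w\rangle/\sqrt{d_k}$ with $q_d = h_dW_Q$ and $k_w = h_wW_K$, substitute $h_w = \widetilde{h}_w + \delta_c + \delta_{r,w}$ (where $\delta_c$ does not depend on $w$ while $\delta_{r,w}$ does), and then track how each summand propagates through softmax. The strategy rests on two classical facts about softmax: it is invariant under a constant additive shift applied to every entry in a row, and it is Lipschitz in its input logits. Together these force the common-mode piece $\delta_c$ to cancel identically, so only the $w$-dependent residual $\delta_{r,w}$ can move the attention distribution.

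Concretely, I would first expand $k_w = \widetilde{k}_w + \delta_cW_K + \delta_{r,w}W_K$ and split the logit into $z_{dw} = \widetilde{z}_{dw} + c + \Delta_{r,w}$, where $c = \langle q_d, \delta_cW_K\rangle/\sqrt{d_k}$ is identical for every neighbor $w \in \mathcal{N}(d)$ and $\Delta_{r,w} = \langle q_d, \delta_{r,w}W_K\rangle/\sqrt{d_k}$ varies with $w$. Applying $\text{softmax}$ row-wise and using $\text{softmax}(z + c\mathbf{1}) = \text{softmax}(z)$ annihilates the common-mode term, yielding $a_d = \text{softmax}(\widetilde{z}_d + \Delta_r)$ while $\widetilde{a}_d = \text{softmax}(\widetilde{z}_d)$. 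At this stage the $\delta_c$ dependence has disappeared, so any deviation of $a_d$ from $\widetilde{a}_d$ must come from $\delta_r$ alone, which is precisely the \emph{blindspot} claim the lemma is making.

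The final step is to invoke the standard Lipschitz bound for softmax to obtain $\|a_d - \widetilde{a}_d\| \le \sigma_0\|\Delta_r\|$ for a constant $\sigma_0$ tied to the softmax Jacobian $\text{diag}(a_d) - a_da_d^\top$, and then to fold in $\|\Delta_{r,w}\| \le P\|\delta_{r,w}\|$ where $P$ absorbs $\|q_d\|\,\|W_K\|/\sqrt{d_k}$ (or any uniform bound on the query--key projection). Chaining these two inequalities gives the stated bound $\|a_d - \widetilde{a}_d\| \le \sigma_0 P\|\delta_r\|$.

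The main obstacle I anticipate is not the algebra but pinning down the constants cleanly: one needs to fix a norm so that both the softmax Lipschitz constant $\sigma_0$ and the key-projection factor $P$ can be taken uniformly across $\mathcal{N}(d)$, and one must be careful that the decomposition $\delta = \delta_c + \delta_r$ is genuinely the row-wise mean/residual split, since otherwise $\delta_c$ is not constant across $w$ and the shift-invariance step does not apply verbatim. Once those choices are made explicit and the Jacobian bound is invoked, the remainder is a short chain of operator-norm inequalities.
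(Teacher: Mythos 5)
Your proposal follows the paper's proof essentially step for step: same decomposition $h_w = \widetilde{h}_w + \delta_c + \delta_r$, same use of softmax translation invariance to cancel the common-mode term $\frac{h_dW_Q(\delta_cW_K)^T}{\sqrt{d}}$, and the same final Lipschitz bound with $P$ absorbing $\|h_dW_Q\|\,\|W_K\|/\sqrt{d_k}$. The caveats you raise about $\delta_c$ needing to be genuinely $w$-independent and about fixing norms are apt and, if anything, make the argument slightly more careful than the paper's own write-up.
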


Here, $\sigma_0$ represents the global Lipschitz constant and $P$ represents the bound on model terms. Full lemma and its proof are provided in \Cref{sec:runway_cascade}.

\Cref{lem:softmax_blindspot} states that when runway influences largely carry common mode information (i.e. shared information between tokens), the attention weights increasingly become insensitive toward such global influences. Viewing attention mechanism as a contractive mixing operation whose repeated application drives representations closer together \cite{wu2023demystifying,barbero2025why,arroyo2025bridging}, these common mode influences may grow larger with model depth. In addition, these common mode influences are invariant to attention aggregation and cascade forward in attention mechanism.

\begin{theorem}[Runway Cascade]\label{thm:runway_cascade}
 Let token representation $h_w = \widetilde{h}_w + \delta$ where $\widetilde{h}_w$ is (unique) token representation that differentiates $h_w$ and $\delta$ denotes the runway influences on $h_w$.  Assume also that $\delta$ can be further decomposed into $\delta = \delta_c + \delta_r$ where $\delta_c$ is the row-wise perturbation (common mode influence) and $\delta_r$ is the residual deviation from row-wise perturbation. Then, their common mode runway influence must persist and cascade forward in the attention mechanism.
\end{theorem}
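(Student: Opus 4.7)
My plan is to combine \Cref{lem:softmax_blindspot} with the row-stochasticity of the attention operator, in two stages: (i) show that within a single attention layer, the common-mode component $\delta_c$ factors through the aggregation unchanged and cannot be removed by reweighting; (ii) iterate this observation across layers to establish the cascading claim.

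For stage (i), I would decompose the attention aggregation at destination $d$. Writing $h_j = \widetilde{h}_j + \delta_c + \delta_{r,j}$ and using that the attention row $a_{d,\cdot}$ is row-stochastic, the aggregate reads
\begin{equation}
    \sum_j a_{dj} h_j \;=\; \sum_j a_{dj}\widetilde{h}_j \;+\; \delta_c \;+\; \sum_j a_{dj}\,\delta_{r,j}.\nonumber
\end{equation}
The key observation is that $\delta_c$ pulls out of the weighted sum via $\sum_j a_{dj} = 1$, so it is preserved at the output \emph{regardless} of the attention pattern. \Cref{lem:softmax_blindspot} then says that the attention weights themselves cannot reroute around $\delta_c$: any deviation $\|a_d - \widetilde a_d\|$ is bounded by $\sigma_0 P\|\delta_r\|$ and is therefore insensitive to $\delta_c$, so no reweighting is available that would cancel the common mode.

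For stage (ii), I would note that the residual-stream factor $I$ appearing in the $(I + A^{(L+t)})$ product of \Cref{thm:osq_layers} re-injects the destination's own representation (and hence its copy of $\delta_c$) at every layer, so the aggregate entering layer $L+1$ still contains a common-mode term that is a sum of the inherited $\delta_c$ and the freshly aggregated one. Assuming $\phi$ is Lipschitz, the update preserves this additive structure up to a distortion controlled by the Lipschitz constant and $\|\delta_r\|$, not by $\|\delta_c\|$. Unfolding the recursion over $r$ layers, the common mode survives each attention-aggregation step and composes with its own image from the previous layer, producing a persistent (indeed accumulating) component in $h_d^{(L+r)}$.

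The main obstacle I expect is handling the nonlinearity in $\phi$ and softmax cleanly without breaking the additive common-mode/residual decomposition, since neither strictly preserves additivity. I plan to bypass this via a first-order Lipschitz expansion: at each layer the update is linear in the inputs up to an error term that is at most $O(\|\delta_r\|)$ by \Cref{lem:softmax_blindspot}, so the qualitative cascading picture — $\delta_c$ propagating essentially unattenuated while only $\|\delta_r\|$-order corrections accumulate — goes through across depth.
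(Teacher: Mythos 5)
Your stage (i) is exactly the paper's argument: decompose $h_w = \widetilde{h}_w + \delta_c + \delta_r$, apply row-stochasticity ($\sum_w a_{dw} = 1$) to pull the common-mode term out of the aggregate as $\delta_c W_V$ unchanged, and invoke \Cref{lem:softmax_blindspot} to note the attention weights cannot be tuned to cancel it. The paper stops there --- it does not formalize your stage (ii); it treats the single-step invariance as sufficient evidence for the informal claim that the common mode ``cascades forward,'' and relegates the multi-layer discussion to prose citing over-smoothing results. Your attempt to unfold the recursion through the $(I + A^{(L+t)})$ product and bound the distortion via a Lipschitz expansion of $\phi$ is a genuine extension beyond what the paper proves, and you rightly flag the nonlinearity of $\phi$ and softmax as the obstacle; the paper sidesteps this entirely by only considering the raw aggregated message $m_d$ before the MLP/layer-norm update, and explicitly cautions in the surrounding prose that layer-norms can complicate the picture. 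So: stage (i) matches; stage (ii) is additional work the paper chose not to do, and would need the Lipschitz bookkeeping you describe to be made rigorous.
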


The full theorem and its proof are provided in \Cref{sec:runway_cascade}.

The implication of runway cascade is that redundant and irrelevant information can have a snowballing effect where the more they influence the global runway landscape, the more they become invisible to the attention mechanism and cascade forward, creating more influence. This suggests that information propagation in causal transformers may be improved if the attention mechanism is explicitly conditioned on runway influences. With this insight, we introduce a runway-aware rewiring strategy for the causal graph.

\section{Runway Aware Rewiring}\label{sec:runway_aware_rewiring}
With the observation that direct attention patterns in their standard formulation may be inadequate in accounting for runway effects, we introduce a runway-aware rewiring strategy for causal attention. Instead of simply computing attention on individual neighbor tokens where common mode redundancies may carry forward unnoticed, we contextualize each token against a global runway summary and rewire their attention pattern accordingly. Our rewiring is a soft-rewiring approach that scales each edge in the causal graph with computed runway-coefficients.

At the core of our rewiring strategy is quantifying how much of a given token $h_m$ in the runway $\mathcal{R}_{ds}$ already carries runway-accumulated redundancies with respect to the destination token $h_d$ where $s \le m < d-1$. We denote this quantity as \textit{runway-coefficient} $r_{dm}$ which is calculated as follows:

\begin{align}
    r_{dm} = \sigma(\tau(h_{d-1}, h_{m}))
\end{align}

where given $h_d$, the token that immediately precedes it in the sequence $h_{d-1}$ computes a compatibility score against each $h_m$ for all $m, s \le m < d-1$ to evaluate how much of the runway is already dominated by $h_m$ through indirect paths. Intuitively, if $h_{d-1}$ is highly compatible with $h_m$, then this indicates that $h_m$ is sufficiently represented in the runway via indirect paths already and thus its contribution should be scaled down proportionally for $h_d$. In this way, the past-adjacent token $h_{d-1}$ serves as a global summary of the runway for $h_d$, giving context for each $h_m$ to measure against. We choose $h_{d-1}$ because it represents the most up-to-date summary of the runway for each $h_d$, making it an appropriate measure of runway landscape. The raw compatibility measure $\tau(\cdot)$ can be calculated with a dot-product (\Cref{eq:runway_dot}) to keep it parameter-free  or with a bilinear form (\Cref{eq:runway_bilin}) to inject some learnability:

\begin{align}
    \tau_{\text{dot}}(h_{d-1}, h_{m}) = h_{d-1}^T\cdot h_{m}\label{eq:runway_dot}\\
    \tau_{\text{bilin}}(h_{d-1}, h_{m}) = h_{d-1}^T B h_{m}, \; B \in \mathbb{R}^{N \times N}\label{eq:runway_bilin}
\end{align}

We use the dot-product formulation (\Cref{eq:runway_dot}) in our main experiments to make sure our rewired model introduces no additional parameters for fair comparison against the standard transformer. We do perform ablations with the bilinear form (\Cref{eq:runway_bilin}) in \Cref{sec:app_ablation_studies} and find that it is comparable in performance to the dot-product approach.

Finally, a smooth thresholding function like sigmoid is applied as $\sigma(\cdot)$ to ensure $0 < r_{dm} < 1$.

As large $r_{dm}$ indicates sufficient representation of $h_m$ in the runway, the direct-path attention between $h_d$ and $h_m$ should then be scaled down proportionally. The multiplicative scaling is done using the complement $\overline{r}_{dm} = 1 - r_{dm}$ to ensure larger runway-coefficient leads to stronger down-scaling. Namely, given the original edge-weights of the causal transformer:

\begin{align}
    e_{dm} = \text{SoftMax}_{\mathcal{N}(d)}(\frac{(h_dW_Q)^T \cdot (h_mW_K)}{\sqrt{d_k}})
\end{align}

the edge weights are scaled (\Cref{eq:scale}) and re-normalized (\cref{eq:renorm}) to ensure each row sums up to 1.0 as: 

\begin{align}
    \widetilde{e}_{dm} = e_{dm} \cdot \overline{r}_{dm}\label{eq:scale}\\
    \widehat{e}_d = \frac{\widetilde{e}_d}{\sum_m \widetilde{e}_{dm}}\label{eq:renorm}
\end{align}

Finally, the complete mechanism for runway-aware rewiring for $h_d$ can be expressed as:

\begin{align}
    h^{(l+1)}_d = \sum_{m} \widehat{e}^{(l)}_{dm}(h^{(l)}_mW_V)
\end{align}

Due to the reliance on the immediate preceding token $h_{d-1}$ for compatibility calculations, they are never scaled down for $h_d$, forming a directed line-graph like topology among tokens wherein information propagates without re-wiring. Likewise, each token's self connection is also preserved without down-scaling. In our setup, edges are always down-scaled and never boosted by runway coefficients. However, due to re-normalization and the mechanics of softmax, edges with relatively small down-scaling effects end up getting boosted overall relative to edges with more severe down-scaling. This way, our rewiring can not only dampen unnecessarily dominant signals but it can also boost over-squashed signals later on to re-introduce them into the representation landscape if need be.

In implementation, our rewiring strategy re-purposes the value vectors from one of the attention heads from standard causal attention to calculate the runway-coefficients and can be seamlessly added on top of the standard attention block. To the best of our knowledge, this represents the first solution wherein the attention mechanism is explicitly made into a function of indirect information propagation patterns along the causal graph. For more information on implementation details and precise accounting of parameter counts, refer to \Cref{sec:impl_details}.

\section{Experiments and Results}
We train two decoder-only transformers with multi-head attention: (1) standard transformer for baseline comparisons and (2) our rewired transformer for runway-aware modeling. Unless specifically stated otherwise, we use the dot-product formulation from \Cref{eq:runway_dot} for our rewired model which introduces no additional parameters on top of the standard transformer. We use the C4 dataset \cite{raffel2020exploring} and train them for next-token prediction. We train models of various sizes, from 50 million parameters up to 750 million parameters, all trained on 20 billion tokens using a global batch size of 256. Unless stated otherwise, context window of 1024 is used as the sequence length. We use Pre-LN \cite{xiong2020layer} instead of Post-LN and use rotary positional encodings (RoPE) \cite{su2024roformer}. We use the Adam optimizer \cite{kingma2014adam} and all models are trained on NVIDIA A100 GPUs. For modeling and implementation details, refer to \Cref{sec:impl_details}.

\subsection{General Language Modeling}\label{sec:gen_modeling}

\begin{figure*}[t]
  \centering
  \begin{subfigure}[t]{0.46\textwidth}
    \centering
    \includegraphics[width=\linewidth]{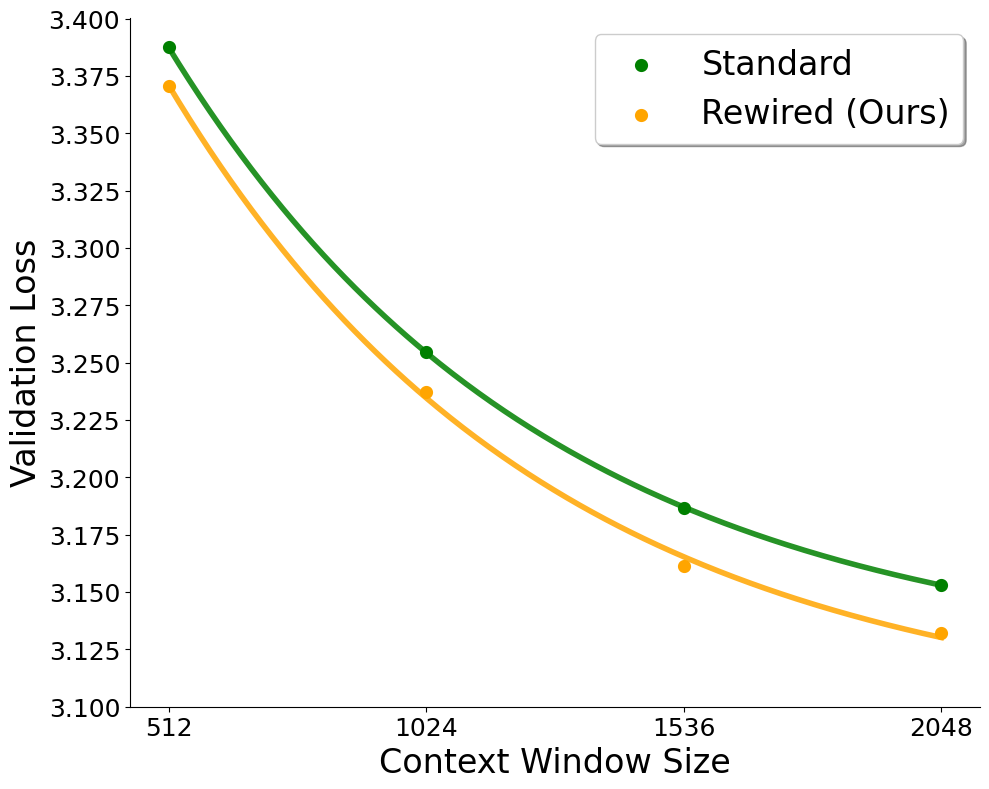}
    \caption{}
    \label{fig:language_seqlen}
  \end{subfigure}\hfill
  \begin{subfigure}[t]{0.46\textwidth}
    \centering
    \includegraphics[width=\linewidth]{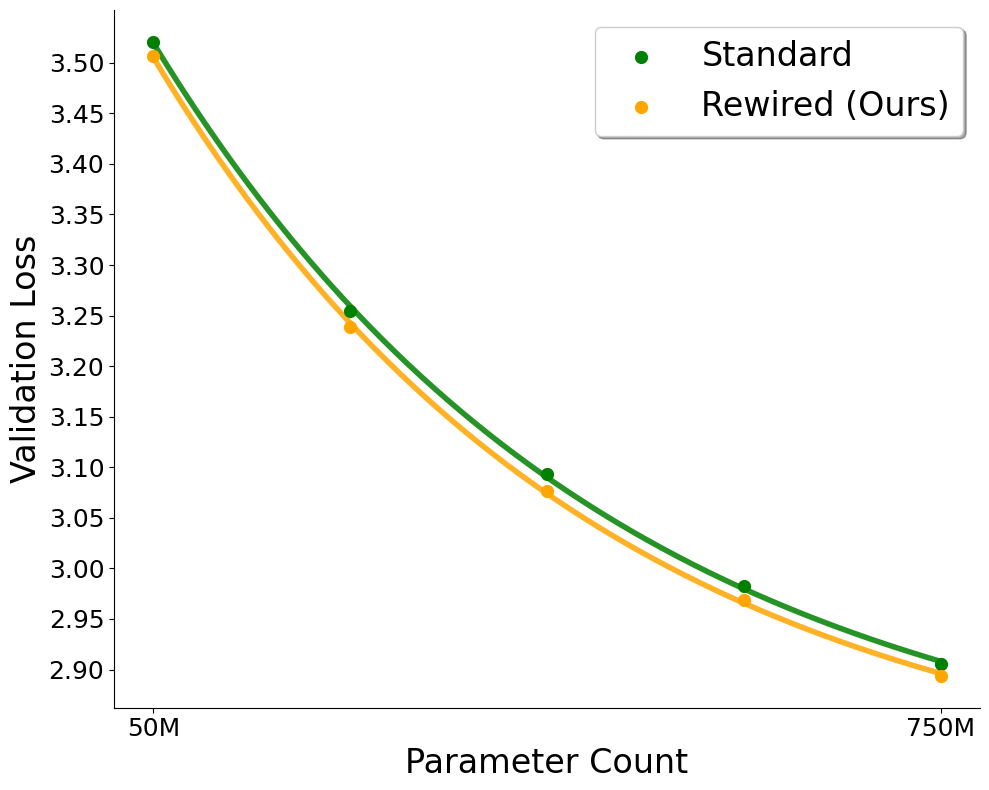}
    \caption{}
    \label{fig:language_ppl}
  \end{subfigure}

  \caption{Validation perplexity on the C4 dataset with (a) varying context window sizes (from 512 to 2048) using a 100 million parameter model and (b) varying model sizes from 50 million to 750 million parameters.}
  \label{fig:language}
\end{figure*}

\paragraph{Validation Performance.}
The main language modeling performance comparisons between standard and our rewired transformers are shown in \Cref{fig:language}. On the left, \Cref{fig:language_seqlen} compares their validation perplexity at varying context window sizes for in-context evaluation. Both standard and rewired transformers are trained with 4 different context window sizes (From 512 to 2048) with 100 million parameters and evaluated at their corresponding train-time context window size. We observe that the rewired transformer outperforms the standard transformer, with its improvements growing with context window size. With growing context window size, so too grows the available runway lengths and thus it is reasonable to expect that our rewired model should show growing improvements compared to standard transformer. On the right, \Cref{fig:language_ppl} compares validation perplexity for both models at various model sizes, from 50 million to 750 million parameters. The context window size is fixed at 1024. Here we observe modest but consistent performance improvements for our rewired transformer likely stemming from our runway-aware rewiring reducing noise and redundancy accumulation and thus keeping token representations slightly sharper.

\paragraph{Reasoning Benchmarks.}
We compare our rewired model against standard transformer on reasoning benchmarks using 750 million parameter models in \Cref{tab:reasoning-results}. For more information on the experimental setup and benchmark datasets used, refer to \Cref{sec:app_reasoning}. Overall, we generally observe modest but consistent improvements across benchmarks for our rewired model (best result for each dataset marked in blue). For ARC variants specifically, ARC-Easy focuses on information co-occurrence which may explain our rewired model's inferior performance as it may over-suppress perceived redundancies in the text. On the other hand, ARC-challenge focuses on multi-hop reasoning and we can see that our rewired model shows a clear advantage compared to standard transformer in this setting.

\begin{table}[h]
    \centering
    \caption{Reasoning performance of standard and our rewired transformers. Best result from each dataset is marked in blue.}
    \label{tab:reasoning-results}
    \resizebox{\columnwidth}{!}{%
    \begin{tabular}{l c c}
        \toprule
        Dataset & \textcolor{ForestGreen}{\textbf{Std. Transformer}} & \textcolor{Orange}{\textbf{Rewired (Ours)}} \\
        \midrule
        ARC-Easy        & \textcolor{blue}{\textbf{43.81\%}} & 42.89\% \\
        ARC-Challenge   & 24.66\% & \textcolor{blue}{\textbf{26.11\%}} \\
        HellaSwag       & 41.75\% & \textcolor{blue}{\textbf{42.49\%}} \\
        PIQA            & 68.61\% & \textcolor{blue}{\textbf{68.82\%}} \\
        CommonsSenseQA   & 19.49\% & \textcolor{blue}{\textbf{19.57\%}} \\
        \bottomrule
    \end{tabular}%
    }
\end{table}


\subsection{Information Retrieval Analysis}
We perform passkey retrieval analysis to compare copying and information retrieval abilities of our rewired transformer against the baseline standard transformer. For both standard and rewired transformers, we use 450 million parameter models trained with general language modeling using 1024 as the sequence length from \Cref{sec:gen_modeling}. For this task, a random 5-digit passkey is embedded into the sequence of natural language where the objective is to retrieve the passkey. We only count exact matches and thus all and only the correct 5 digits of the passkey must be retrieved for it to count as successful retrieval. For more information on the experimental setup, refer to \Cref{sec:app_info_retrieval}.

For the main retrieval analysis in \Cref{fig:retrieval_main}, we compare performance of both models at varying context window sizes, starting in-context at 1024 and increasing the context window size up to 1536. Passkeys are embedded at 10 different depths for each context length and each context length's performance is run 10 times. Their overall results are averaged and the light shaded regions represent their standard deviation. 

As shown in \Cref{fig:retrieval_main}, we find that our rewired transformer shows superior retrieval performance compared to the standard transformer. Furthermore, we observe that our rewired version maintains performance for longer while standard transformer's performance collapses much more quickly when taken out of in-context regime. Notably, at context window length of 1280, the standard transformer already shows less than 20\% accuracy while our rewired version maintains reasonable performance at above 60\%. In fact, the rewired transformer's performance at context length 1280 is comparable to the standard transformer's performance at context length 1216. 

\begin{figure}[h]
  \centering
  \includegraphics[width=\linewidth]{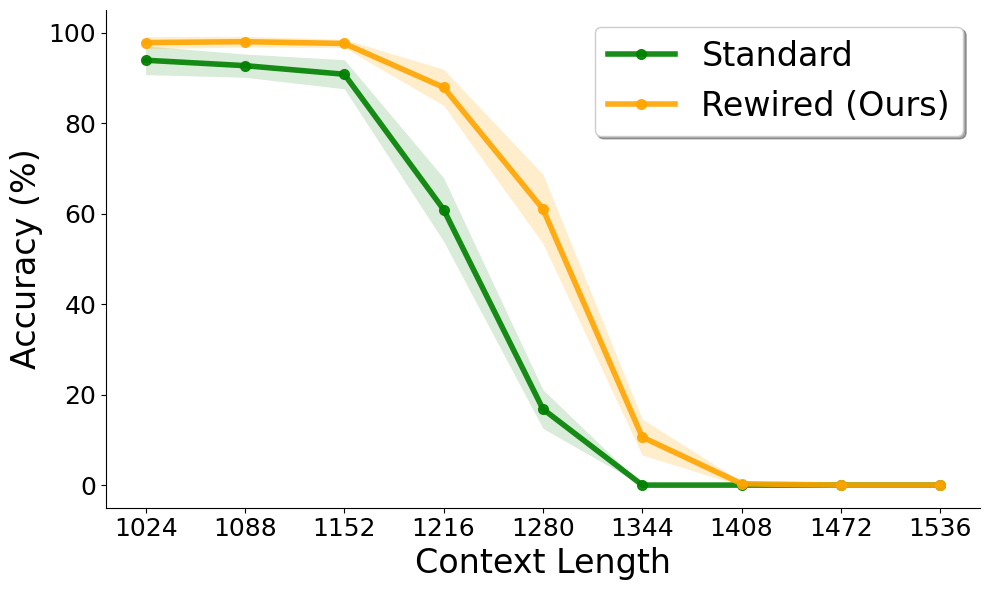}
  \caption{Passkey retrieval test for standard and our rewired transformers. We use 450 million parameter models trained on general language tasks at context length of 1024 and test their retrieval abilities up to context length of 1536.}
  \label{fig:retrieval_main}
\end{figure}

The superior performance of our rewired transformer beyond the training context length is evidence that runway awareness may contribute to improved long-range information routing. This also shows that our model may be less sensitive to standard transformers' failure mode where indirect, multi-hop influences progressively accumulate and overwhelm the intended signal.

Next, we perform depth-based comparisons in \Cref{fig:retrieval_depth}. The same models from \Cref{fig:retrieval_main} now have their accuracies measured at various depths of passkey placement within the sequence with the sequence length fixed at 1216. Performance at each depth is measured 10 times and averaged with the shaded regions showing the standard deviation. 

\begin{figure}[h]
  \centering
  \includegraphics[width=\linewidth]{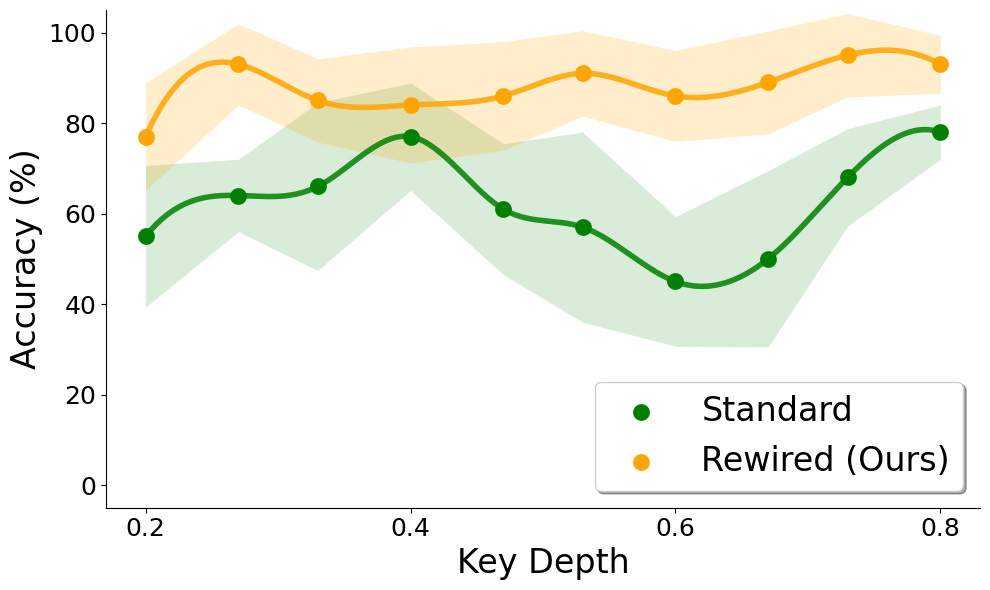}
  \caption{Zero-shot passkey retrieval test for standard and our rewired transformers where passkeys are embedded at varying depths.}
  \label{fig:retrieval_depth}
\end{figure}

Not only do we observe superior performance for the rewired model at every depth, but we also see that the U-shaped curve from lost-in-the-middle effect is completely remedied in the rewired version. In contrast, standard transformer clearly exhibits the U-shaped curve in its performance and shows degradation towards the middle of the sequence, matching observations from prior works \cite{liu2024lost,barbero2024transformers}. Overall, superior retrieval results despite identical parameter counts suggest that our rewired transformer is better at maintaining representation sharpness and token-level distinctness as the context grows, rather than letting tokens get indistinguishably mixed along the indirect pathways.

\subsection{Extrapolation Analysis}\label{sec:extrapolation_analysis}
Beyond retrieval tasks, we isolate the extrapolation setting to further analyze long-range information routing capabilities of our rewired model compared to standard transformer. 

In \Cref{fig:extrap_delta}, we show extrapolation abilities of both standard and rewired transformers at sequence length of 2048 across various model sizes. Both models were trained with in-context sequence length of 1024. We observe consistent improvements for our rewired transformer. In addition, we find that standard transformers need around 150 million more parameters to reach comparable extrapolation performance as our rewired model (refer to \Cref{sec:app_extrapolation_analysis}). This parameter efficiency gains suggest that runway-aware rewiring addresses a fundamental architectural bottleneck of standard attention. As extrapolated contexts create longer runways with more intermediate tokens between source and destination pairs, the progressive accumulation of runway influences may grow more severe, requiring standard transformers to compensate through increased model capacity. In contrast, our rewiring mechanism maintains awareness of runway accumulations by design. The consistent improvements across model size further indicate that our rewiring method is tackling a fundamental architectural limitation of standard transformers, rather than artifacts of insufficient model capacity.

\begin{figure}[h]
  \centering
  \includegraphics[width=\linewidth]{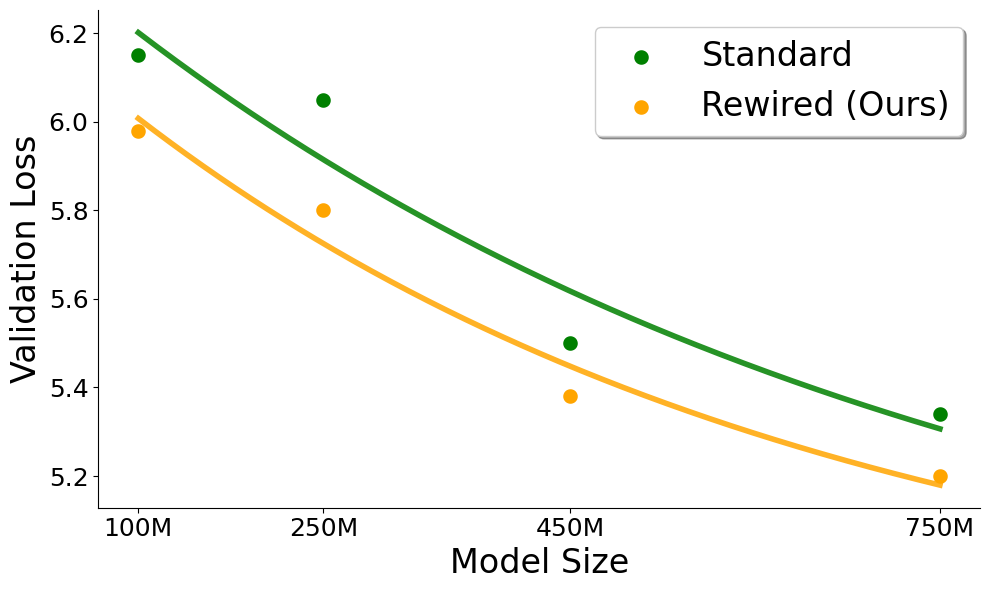}
  \caption{Extrapolation at sequence length 2048 across model sizes for standard and rewired transformers.}
  \label{fig:extrap_delta}
\end{figure}

\subsection{Ablations and Analysis}
We provide ablation studies, namely comparing the dot-product rewiring approach with the bilinear form, in \Cref{sec:app_ablation_studies}. We find that they are comparable in performance across tasks. In addition, we examine the attention map and rewiring patterns for our rewired model in \Cref{sec:att_pattern_analysis}. We find that while early tokens are generally down-scaled more heavily than later tokens, the rewiring mechanism learns complex patterns beyond simple distance-based decay and in fact learns to show diverse rewiring behaviors.

\section{Conclusion}
In this paper, we introduced runway cascade in transformers as the misalignment between direct-path attention and compounding influence of indirect paths along the causal graph. We performed theoretical analysis to examine information propagation patterns in decoder-only transformers and showed how certain redundancies and irrelevant information can cascade forward, invariant to attention mechanism. Building on these theoretical observations, we introduced runway-aware rewiring to better align the attention mechanism with runway influences. Empirically, our rewiring approach showed superior modeling capabilities compared to standard transformers across language tasks.


\section*{Impact Statement}
This paper presents work whose goal is to advance the field of Machine
Learning, specifically in language modeling. There are many potential societal consequences of our work, none which we feel must be specifically highlighted here.

\bibliography{lee_icml26}
\bibliographystyle{icml2026}

\newpage
\appendix
\onecolumn

\section{Proofs}

\subsection{Token Sensitivity Analysis}\label{sec:osq_for_trans}

In order to draw sensitivity bounds on causal transformers in terms of the relative runway of the given token pair, it needs to account for arbitrary tokens and arbitrary layers. However, most over-squashing bounds in GNN literature only bound the Jacobian of a node at the final layer with respect to a node at the input layer. In addition, most works also assume a fixed, binary graph structure, which can simply be expressed as a power of the number of layers in the Jacobian bound. However, since transformers employ dynamic attention patterns in each layer, the fixed topology bound does not apply directly. As such, we draw a general over-squashing bound using the Jacobian of arbitrary nodes $h_d, h_s$ at arbitrary layers $L+r$ and $L$, for $L \ge 0$ and $r \ge 0$. This is formulated below with \Cref{thm:app_osq_layers} which incorporates over-squashing bounds for any two nodes at any two layers and accounting for dynamic attention patterns in each layer. In our proofs, we make the simplifying assumption that attention weights are independent of $h$, following the convention of \cite{barbero2024transformers}. Likewise, we also assume supplementary operations such as layer-norms and residual connections are absorbed into $\phi$ and bounded. We use these simplifying assumptions to draw clean bounds highlighting information propagation patterns for interpretive purposes. In reality, the attention weights would be a function of token representations but their corresponding information propagation patterns from a topological point-of-view would remain unchanged from our analysis.

\begin{theorem}\label{thm:app_osq_layers}
    Assume a token $h_s$ at layer $L\ge 0$ and another token $h_d$ at $r \ge 0$ layers apart and on arbitrary positions $s \le d$. In addition, let $C >0$ be some constant and $A$ be lower-triangular, row-stochastic adjacency operator. Then, the representational sensitivity of $h_d$ with respect to $h_s$ can be bounded as:

    \begin{equation}
        \left\|\frac{\partial h^{(L+r)}_d}{\partial h^{(L)}_s}\right\| \le C^{r}(\prod^{r-1}_{t=0}(I + A^{(L+t)}))_{ds}\nonumber
    \end{equation}
\end{theorem}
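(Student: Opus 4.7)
The plan is to prove the bound by induction on the layer gap $r$, using the chain rule to step through one attention layer at a time and absorbing all non-topological factors (MLP Lipschitz constants, $W_V$, layer-norms, residual scalings) into the single constant $C$. The base case $r=0$ is immediate: the Jacobian $\partial h^{(L)}_d/\partial h^{(L)}_s$ equals $\delta_{ds}I$, which agrees with $(I)_{ds}$ coming from the empty product $\prod_{t=0}^{-1}(I+A^{(L+t)})$, and $C^0=1$.

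For the inductive step I would first derive a single-layer Jacobian bound. Writing the GNN-style update from the paper as $h^{(l+1)}_d = \phi(h^{(l)}_d,\bar h^{(l)}_d)$ with $\bar h^{(l)}_d = \sum_k A^{(l)}_{dk}(h^{(l)}_k W_V)$, differentiation decomposes into a self-update piece proportional to $\delta_{dj}$ (from $\phi$'s first slot and the residual connection it encodes) and a neighbor-mixing piece proportional to $A^{(l)}_{dj}$ (from $\phi$'s second slot composed with $\psi$). Taking norms and lumping every Lipschitz constant into $C$ yields the clean per-layer estimate
\begin{equation}
\left\|\frac{\partial h^{(l+1)}_d}{\partial h^{(l)}_j}\right\| \le C\,(I + A^{(l)})_{dj}.\nonumber
\end{equation}
Feeding this into the chain rule $\partial h^{(L+r+1)}_d/\partial h^{(L)}_s = \sum_j (\partial h^{(L+r+1)}_d/\partial h^{(L+r)}_j)(\partial h^{(L+r)}_j/\partial h^{(L)}_s)$, applying the triangle inequality and the inductive hypothesis on the right factor, and observing that the sum over $j$ is exactly matrix multiplication, the products telescope into $C^{r+1}\bigl(\prod_{t=0}^{r}(I+A^{(L+t)})\bigr)_{ds}$ as required.

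I expect the main obstacle to lie in justifying the treatment of $A^{(l)}$ as a quantity independent of $h$ during differentiation. In full generality $A^{(l)}_{dj}$ depends on $h^{(l)}_d$ and $h^{(l)}_j$ through the softmax-attention map, so the per-layer Jacobian picks up an additional $\partial A^{(l)}/\partial h$ term that would destroy the neat matrix-product telescoping and obscure the $(I+A^{(l)})$ structure. Following the convention of \citet{barbero2024transformers}, I would adopt the simplifying assumption that attention weights are frozen with respect to $h$ and argue that this preserves the topological propagation pattern the theorem is meant to expose. A secondary bookkeeping hazard is ensuring residual connections and normalization layers are absorbed into $C$ so that the identity and the attention terms both receive the same constant; if the self-branch and mixing-branch are bounded with different constants, the clean factor $(I+A^{(l)})$ is replaced by an unwieldy weighted combination and the induction no longer packages into a single $C^r$.
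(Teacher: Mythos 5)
Your proposal matches the paper's proof: both proceed by induction on $r$ with the Kronecker-delta base case, differentiate the GNN-style update $\phi(h_d,\sum_w A_{dw}\psi(h_d,h_w))$ into a self term (giving $I$) and a mixing term (giving $A^{(L+r)}$), absorb $\|\nabla\phi\|$ and $\|\nabla\phi\|\|\nabla\psi\|$ into a common $C$, and let the sum over neighbors collapse into a matrix product $(I+A^{(L+r)})\prod_{t=0}^{r-1}(I+A^{(L+t)})$. The simplifying assumptions you flag (freezing $A$ with respect to $h$, absorbing residuals and layer-norms into $\phi$) are exactly the ones the paper states just before the theorem, so your plan is correct and essentially identical to the paper's argument, merely with the one-step bound stated as an intermediate lemma rather than inlined.
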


\begin{proof}
    We prove this by induction on layer $r$. For the base case of $r=0$, either $d = s$ or $d \neq s$, and thus

    \begin{equation}
    \frac{\partial h^{(L)}_d}{\partial h^{(L)}_s} =
    \begin{cases}
        I & \text{if } d=s,\\
        0 & \text{if } d\neq s,
    \end{cases}
    \nonumber
    \end{equation}
    
    where $I$ is the (feature-dimension) identity matrix and $0$ is the zero matrix. Therefore,
    \begin{equation}
        \left\|\frac{\partial h^{(L)}_d}{\partial h^{(L)}_s}\right\| =
        \begin{cases}
            I & \text{if } d=s,\\
            0 & \text{if } d\neq s,
        \end{cases}
        = \delta_{ds} \le (I)_{ds}.
        \nonumber
    \end{equation}

    Assume that the statement holds for some $r \ge 0$. We now prove the case for layer $r+1$ by induction.
    \begin{align}
        \left\|\frac{\partial h^{(L+r+1)}_d}{\partial h^{(L)}_s}\right\| &= \left\|\partial_1\phi_{L+r}\cdot \frac{\partial h^{(L+r)}_d}{\partial h^{(L)}_s} + \partial_2\phi_{L+r}\sum_{w \in \mathcal{N}_d}A^{(L+r)}_{dw}\partial\psi_{L+r}\cdot \frac{\partial h^{(L+r)}_w}{\partial h^{(L)}_s}\right\|\nonumber\\
        &\le \left\|\partial_1\phi_{L+r}\right\|\cdot \left\|\frac{\partial h^{(L+r)}_d}{\partial h^{(L)}_s}\right\| + \left\|\partial_2\phi_{L+r}\right\| \cdot \left\|\partial\psi_{L+r}\right\|\cdot \sum_{w \in \mathcal{N}_d}A^{(L+r)}_{dw}\cdot \left\|\frac{\partial h^{(L+r)}_w}{\partial h^{(L)}_s}\right\| \nonumber\\
    \intertext{Let $C \ge \left\|\nabla \phi\right\|$ and $C \ge     \left\|\nabla \phi\right\| \cdot \left\|\nabla \psi\right\|$,}\nonumber\\
        &\le C\cdot \left\|\frac{\partial h^{(L+r)}_d}{\partial h^{(L)}_s}\right\| + C\cdot \sum_{w \in \mathcal{N}_d}A^{(L+r)}_{dw}\cdot \left\|\frac{\partial h^{(L+r)}_w}{\partial h^{(L)}_s}\right\| \nonumber\\
        &\le C C^r(\prod^{r-1}_{t=0}(I+A^{(L+t)}))_{ds} + C \sum_{w \in \mathcal{N}(d)}A^{(L+r)}_{dw}C^r(\prod^{r-1}_{t=0}(I+A^{(L+t)}))_{ws}\tag{induction}\nonumber\\
        &= C^{r+1}\left(\left(\prod^{r-1}_{t=0}(I+A^{(L+t)})\right)_{ds} + \sum_{w \in \mathcal{N}(d)}A^{(L+r)}_{dw}\left(\prod^{r-1}_{t=0}(I+A^{(L+t)})\right)_{ws}\right)\nonumber\\
    \intertext{Defining $A^{(l)}_{dw} = 0$ for $w \notin \mathcal{N}(d)$}\nonumber\\
        &= C^{r+1}\left((I + A^{(L+r)})\prod^{r-1}_{t=0}(I+A^{(L+t)})\right)_{ds}\tag{definition of matrix multiplication}\nonumber\\
        &= C^{r+1}\left(\prod^{r}_{t=0}(I+A^{(L+t)})\right)_{ds}\nonumber\\
    \intertext{Therefore,}\nonumber\\
        \left\|\frac{\partial h^{(L+r+1)}_d}{\partial h^{(L)}_s}\right\| &\le C^{r+1}\left(\prod^{r}_{t=0}(I+A^{(L+t)})\right)_{ds}\nonumber
    \end{align}

    This concludes the proof for \Cref{thm:app_osq_layers}.
\end{proof}

\subsection{Blindspots for Softmax Attention and Runway Cascade}\label{sec:runway_cascade}
In order to construct our theoretical results on blindspots for attention mechanism, we first quickly review the translation invariance property of the softmax function.

\begin{lemma}[Translation invariance of softmax]\label[lemma]{lem:trans_inv}
    $\text{SoftMax}(z_i) = \text{SoftMax}_j(z_i + F)$ where $F$ denotes some common factor for all $z_i, \; i \in \mathcal{N}(j)$
\end{lemma}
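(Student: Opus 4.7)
The approach is purely computational: unfold the definition of the softmax operator over the normalization set $\mathcal{N}(j)$ and show that a constant shift $F$ applied uniformly to every entry in that set cancels in the ratio. Writing $\text{SoftMax}_j(z_i) = \exp(z_i)/\sum_{k \in \mathcal{N}(j)} \exp(z_k)$, the plan is to substitute $z_i \mapsto z_i + F$ for every $i \in \mathcal{N}(j)$, observe via $\exp(a+b) = \exp(a)\exp(b)$ that the numerator becomes $\exp(F)\exp(z_i)$ while the denominator becomes $\exp(F)\sum_{k \in \mathcal{N}(j)} \exp(z_k)$, pull the shared $\exp(F)$ out of the sum, and cancel it against the numerator.

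Before the calculation, I would first make precise what "common factor" means in the statement: $F$ is a scalar that is added identically to every logit entering the row's normalization (it may depend on the row index $j$, but within a fixed row it must be independent of $i$). This is the only substantive condition; if $F$ were permitted to vary across coordinates of $\mathcal{N}(j)$, the pull-out step would fail and the cancellation would not occur. With this condition stated up front, the proof is a single line of algebra.

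There is no genuine obstacle here, so rather than a "hard step" I would flag only the notational care needed to keep the scope of $F$ unambiguous. I would close by observing how the lemma will be used downstream: in the blindspot analysis of \Cref{lem:softmax_blindspot} and in the runway cascade result of \Cref{thm:runway_cascade}, the common-mode component $\delta_c$ of a runway perturbation induces, to leading order, a row-uniform additive shift in the attention logits, and this lemma is precisely what guarantees that such a shift is absorbed and therefore invisible to the resulting attention distribution.
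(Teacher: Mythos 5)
Your proposal is correct and takes essentially the same approach as the paper: both unfold the softmax, use $e^{z+F}=e^F e^z$ to factor $e^F$ out of numerator and denominator, and cancel. Your version is marginally more careful notationally (using a distinct summation index $k$ where the paper reuses $j$, and stating explicitly that $F$ may depend on the row but not on the coordinate), but the argument is the same one-line computation.
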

\begin{proof}
    \begin{align}
        \text{SoftMax}_j(z_i + F) &= \frac{e^{(z_i + F)}}{\sum_j e^{(z_j + F)}}\nonumber\\
        &= \frac{e^F e^{z_i}}{\sum_j e^F e^{z_j}} = \frac{e^F e^{z_i}}{e^F\sum_j e^{z_j}}\nonumber\\
        &= \frac{e^{z_i}}{\sum_j e^{z_j}}\nonumber\\
    \intertext{Therefore, $\text{SoftMax}_j(z_i + F) = \text{SoftMax}_j(z_i)$.}\nonumber
    \end{align}
    This concludes the proof for \Cref{lem:trans_inv}.
\end{proof}

\begin{lemma}[Blindspots for standard attention mechanism]\label[lemma]{lem:app_softmax_blindspot}
Let token representation $h_w = \widetilde{h}_w + \delta$ where $\widetilde{h}_w$ is (unique) token representation that differentiates $h_w$ and $\delta$ denotes the runway influences on $h_w$.  Assume also that $\delta$ can be further decomposed into $\delta = \delta_c + \delta_r$ where $\delta_c$ is the row-wise perturbation (common mode influence) and $\delta_r$ is the residual deviation from row-wise perturbation. Then, attention weights cannot discern runway influences except through (small) residual deviations as:
\begin{align}
    \left\|a_{d} - \widetilde{a}_{d}\right\| \le \sigma_0P\left\|\delta_r\right\|\nonumber
\end{align}
    
\end{lemma}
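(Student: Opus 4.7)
The plan is to show that the common-mode part of the runway perturbation contributes only a constant shift to the pre-softmax logits and is therefore absorbed by the softmax, leaving only the residual $\delta_r$ to drive the attention change. First I would write the pre-softmax attention logit as $z_{dm} = \langle h_d W_Q,\, h_m W_K\rangle$ and substitute the decomposition $h_m = \widetilde{h}_m + \delta_c + \delta_r^{(m)}$ (reading $\delta_c$ as a single vector shared across all $m$ in the row, and $\delta_r^{(m)}$ as the per-key residual). This yields three terms: the unperturbed logit $\widetilde{z}_{dm} = \langle h_d W_Q,\, \widetilde{h}_m W_K\rangle$, a common offset $F = \langle h_d W_Q,\, \delta_c W_K\rangle$ which does not depend on $m$, and a residual $\epsilon_m = \langle h_d W_Q,\, \delta_r^{(m)} W_K\rangle$.

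The second step is to invoke \Cref{lem:trans_inv}: because $F$ is the same scalar added to every logit in the row, we have $\mathrm{SoftMax}_m(\widetilde{z}_{dm} + F + \epsilon_m) = \mathrm{SoftMax}_m(\widetilde{z}_{dm} + \epsilon_m)$. Consequently $a_d$ and $\widetilde{a}_d$ differ only through the residual vector $\epsilon = (\epsilon_m)_m$, and the common-mode $\delta_c$ is structurally invisible to the attention output. This is the conceptual heart of the lemma, so I would state it explicitly before moving to quantitative bounds.

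The third step is a standard Lipschitz bookkeeping argument. Using the fact that softmax is globally $\sigma_0$-Lipschitz as a map $\mathbb{R}^N \to \Delta^{N-1}$, I obtain $\|a_d - \widetilde{a}_d\| \le \sigma_0 \|\epsilon\|$. Then Cauchy–Schwarz on each coordinate gives $|\epsilon_m| \le \|h_d W_Q\|\,\|W_K\|\,\|\delta_r^{(m)}\|$, and absorbing the operator-norm of $W_K$ together with the ambient bound on $\|h_d W_Q\|$ into the constant $P$, I collect $\|\epsilon\| \le P\|\delta_r\|$, yielding the stated bound
\begin{equation}
    \|a_d - \widetilde{a}_d\| \le \sigma_0 P \|\delta_r\|.\nonumber
\end{equation}

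The main obstacle, and really the only non-routine point, is pinning down the decomposition so that $\delta_c$ is genuinely independent of the key index $m$: any hidden $m$-dependence in the "common mode" would break translation invariance and spoil the cancellation of $F$. I would therefore devote a short paragraph at the start of the proof to fixing this interpretation (stating $\delta_c$ as the projection of the stacked perturbation onto the constant-in-$m$ subspace, and $\delta_r^{(m)}$ as its orthogonal complement). After that commitment, the remaining steps are standard Lipschitz/Cauchy–Schwarz estimates and require no further subtlety.
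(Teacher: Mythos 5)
Your proposal mirrors the paper's proof exactly: substitute the decomposition $h_w = \widetilde{h}_w + \delta_c + \delta_r$ into the pre-softmax logit, cancel the common-mode term $\delta_c$ via the translation invariance of softmax (\Cref{lem:trans_inv}), and bound the remaining deviation by the global Lipschitz constant $\sigma_0$ of softmax times a product $P$ of model-norm terms. Your explicit insistence that $\delta_c$ be a single vector independent of the key index $m$ --- so that $F$ is a genuine row-constant and translation invariance actually applies --- is the correct reading and a useful clarification of a point the paper's statement leaves implicit.
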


\begin{proof}
\begin{align}
    \intertext{We use $\sigma$ to denote the softmax function here.}\nonumber\\
    a_{dw} &= \sigma_{\mathcal{N}(d)}\left(\frac{h_dW_Q(h_wW_K)^T}{\sqrt{d}}\right) = \sigma_{\mathcal{N}(d)}\left(\frac{h_dW_Q((\widetilde{h}_w + \delta_c + \delta_r)W_K)^T}{\sqrt{d}}\right),\; \forall w \in \mathcal{N}(d)\nonumber\\
    \intertext{Expanding the numerator $n = h_dW_Q((\widetilde{h}_w + \delta_c + \delta_r)W_K)^T$,}\nonumber\\
    n &= h_dW_Q(\widetilde{h}_wW_K + \delta_cW_K + \delta_rW_K)^T\nonumber\\
    &= h_dW_Q(\delta_cW_K)^T + h_dW_Q(\widetilde{h}_wW_K + \delta_rW_K)^T\tag{factoring}\nonumber
    \intertext{plugging $n$ back into softmax logit numerator,}\nonumber\\
    a_{dw} &= \sigma_{\mathcal{N}(d)}\left(\frac{h_dW_Q(\delta_cW_K)^T + h_dW_Q(\widetilde{h}_wW_K + \delta_rW_K)^T}{\sqrt{d}}\right)\nonumber\\
    \intertext{Taking $F_{\text{common}} = \frac{h_dW_Q(\delta_cW_K)^T}{\sqrt{d}}$ for all keys $h_wW_K, \; \forall w \in \mathcal{N}(d)$,}\nonumber\\
    a_{dw} &= \sigma_{\mathcal{N}(d)}\left(F_{\text{common}} + \frac{h_dW_Q(\widetilde{h}_wW_K + \delta_rW_K)^T}{\sqrt{d}}\right) = \sigma_{\mathcal{N}(d)}\left(\frac{h_dW_Q(\widetilde{h}_wW_K + \delta_rW_K)^T}{\sqrt{d}}\right)\tag{translation invariance \Cref{lem:trans_inv}}\nonumber\\
    \intertext{Then the Lipschitz bound between $a_d$ and $\widetilde{a}_d$ can be formulated as:}\nonumber\\
    \left\|a_d - \widetilde{a}_d\right\| &\le \sigma_0P\left\|\delta_r\right\|
    \intertext{where $\sigma_0$ is the global Lipschitz constant and $P$ is the bound on model terms for $h_d$'s direct-path attention defined as: $P = \frac{\left\|h_dW_Q\right\|\left\|W_K\right\|}{\sqrt{d}}$}\nonumber\\
    \intertext{Therefore,}\nonumber\\
    \left\|a_{d} - \widetilde{a}_{d}\right\| &\le \sigma_0P\left\|\delta_r\right\|\nonumber
\end{align}
    This concludes the proof for \Cref{lem:app_softmax_blindspot}.
\end{proof}

\begin{theorem}[Runway Cascade]\label{thm:app_runway_cascade}
 Let token representation $h_w = \widetilde{h}_w + \delta$ where $\widetilde{h}_w$ is (unique) token representation that differentiates $h_w$ and $\delta$ denotes the runway influences on $h_w$.  Assume also that $\delta$ can be further decomposed into $\delta = \delta_c + \delta_r$ where $\delta_c$ is the row-wise perturbation (common mode influence) and $\delta_r$ is the residual deviation from row-wise perturbation. Then, their common mode runway influence must persist and cascade forward in the attention mechanism.
\end{theorem}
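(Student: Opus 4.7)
The plan is to track how the decomposition $h_w = \widetilde{h}_w + \delta_c + \delta_r$ is transformed by one application of the attention block, and show that the common-mode component $\delta_c$ emerges in the output as an essentially unfiltered additive shift, so that the next layer's input carries it forward for another round.

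First I would expand the attention output at an arbitrary destination $h_d$ by substituting the decomposition:
\begin{equation}
\text{out}_d = \sum_{w \in \mathcal{N}(d)} a_{dw}\bigl(\widetilde{h}_w + \delta_c + \delta_r\bigr)W_V.\nonumber
\end{equation}
Splitting this into three sums isolates one term for the unique content, one for the common mode, and one for the residual deviation.

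Next I would invoke \Cref{lem:app_softmax_blindspot} to replace $a_{dw}$ by the ``clean'' weights $\widetilde{a}_{dw}$ up to an $O(\|\delta_r\|)$ correction. Because $\delta_c$ does not depend on $w$ and the attention row is row-stochastic, $\sum_w a_{dw}\,\delta_c W_V = \delta_c W_V$ exactly, regardless of how softmax redistributes mass across neighbors. This yields
\begin{equation}
\text{out}_d = \underbrace{\sum_{w} \widetilde{a}_{dw}\,\widetilde{h}_w W_V}_{\text{clean output}} \; + \; \underbrace{\delta_c W_V}_{\text{preserved common mode}} \; + \; \varepsilon,\nonumber
\end{equation}
where $\varepsilon$ collects terms of order $\|\delta_r\|$ from the residual and from the attention-weight perturbation. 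This is the key identity: translation invariance of softmax (\Cref{lem:trans_inv}) makes attention \emph{blind} to $\delta_c$, and row-stochasticity then passes $\delta_c$ straight through as an additive shift in the value space.

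Finally I would close the cascade by pushing the shift into the next layer. The residual connection preserves additive shifts exactly, and under the simplifying assumption that LayerNorm and the MLP are absorbed into a bounded, Lipschitz $\phi$ (the same convention as in \Cref{thm:app_osq_layers}), $\phi$ may attenuate but cannot annihilate a shared additive component across the row. Therefore $h^{(l+1)}_w$ admits a re-decomposition with nonzero common mode $\delta_c^{(l+1)}$ inherited from $\delta_c^{(l)} W_V$, and the entire argument applies again at layer $l+1$. Iterating this one-step identity gives the inductive cascade claimed by the theorem.

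The main obstacle is the ``cascade'' half rather than the ``persist'' half: at the attention sub-block alone, persistence is a clean consequence of \Cref{lem:trans_inv} and row-stochasticity, but propagation across layers requires controlling $\phi$, which is nonlinear. I would handle this by carrying out the precise calculation only at the attention sub-block level and appealing to the boundedness/Lipschitz assumptions on $\phi$ (already adopted for \Cref{thm:app_osq_layers}) to guarantee that the common mode survives layer updates up to a bounded contraction, so it cannot vanish in finitely many steps.
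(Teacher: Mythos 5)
Your proposal takes essentially the same approach as the paper: substitute $h_w = \widetilde{h}_w + \delta_c + \delta_r$ into $m_d = \sum_{w} a_{dw} h_w W_V$, split the sum, and use row-stochasticity $\sum_w a_{dw} = 1$ to pull out $\delta_c W_V$ as an attention-invariant additive shift. The paper's proof stops exactly at that one-step identity $m_d = \delta_c W_V + \sum_w a_{dw}(\widetilde{h}_w + \delta_r)W_V$, leaving the layer-to-layer cascade to informal discussion, whereas you additionally sketch pushing the shift through the residual stream and $\phi$. One caution on that extra step: your claim that a bounded Lipschitz $\phi$ ``cannot annihilate a shared additive component across the row'' is stronger than the assumptions warrant, and the paper itself flags precisely this subtlety after the proof, noting that LayerNorm can prevent such accumulation in some regimes. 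You also invoke \Cref{lem:app_softmax_blindspot} to swap $a_{dw}$ for $\widetilde{a}_{dw}$, but as you correctly observe, this is unnecessary for the key term — the $\delta_c W_V$ extraction goes through for any row-stochastic weights.
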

\begin{proof}
\begin{align}
     \intertext{Let the attention-aggregated message be $m_d = \sum_{w\in\mathcal{N}(d)}a_{dw} h_wW_V$ where $a_{dw} = \text{SoftMax}_{\mathcal{N}(d)}\left(\frac{h_dW_Q\cdot (h_wW_K)^T}{\sqrt{d}}\right)$. We have established in \Cref{lem:app_softmax_blindspot} that each $a_{dw}$ is blind to common mode influences.}\nonumber\\
     m_d &= \sum_{w\in\mathcal{N}(d)}a_{dw}h_wW_V = \sum_{w\in\mathcal{N}(d)}a_{dw}(\widetilde{h}_w + \delta_c + \delta_r)W_V\nonumber\\
     &= \sum_{w\in\mathcal{N}(d)}a_{dw}\widetilde{h}_wW_V + a_{dw}\delta_cW_V + a_{dw}\delta_rW_V\nonumber\\
     &= \sum_{w\in\mathcal{N}(d)}a_{dw}\delta_cW_V + \sum_{w\in\mathcal{N}(d)}\left(a_{dw}\widetilde{h}_wW_V + a_{dw}\delta_rW_V\right)\nonumber\\
     \intertext{Leveraging row-stochasticity of causal attention weights where $\sum_{w\in\mathcal{N}(d)} a_{dw} = 1.0$,}\nonumber\\
     &= \delta_cW_V + \sum_{w\in\mathcal{N}(d)}a_{dw}(\widetilde{h}_w + \delta_r)W_V\nonumber\\
     \intertext{Therefore,}\nonumber\\
     m_d &= \delta_cW_V + \sum_{w\in\mathcal{N}(d)}a_{dw}(\widetilde{h}_w + \delta_r)W_V\nonumber
\end{align}
    This concludes the proof for \Cref{thm:app_runway_cascade}.
\end{proof}

Overall, \Cref{lem:app_softmax_blindspot} and \Cref{thm:app_runway_cascade} illustrate how common mode runway influences are invariant to the attention mechanism and that these influences can cascade forward, potentially resulting in accumulations of noise and redundancies. The accumulation of common mode influences can be explained by theory of over-smoothing in attention-based aggregation schemes, where attention acts as a contractive mixing operator that drives token representations closer together with repeated application \cite{wu2023demystifying,barbero2025why,arroyo2025bridging}. While we isolate the attentional component of the transformer to drive our theoretical conclusions, it should be noted that with the application of additional components such as MLPs and layer-norms, things can become less straight-forward. Specifically, layer-norms have been shown to be able to prevent over-smoothing to the point of complete rank collapse \cite{wu2024role} with the same work also providing counter-examples of when layer-norms cannot prevent over-smoothing. 

\section{Implementation Details}\label{sec:impl_details}
In this section, we discuss implementation and comparison details for our rewired transformers and baseline standard transformers.

\paragraph{Training Setup.}
For training, we use the C4 dataset \cite{raffel2020exploring} and train them for standard next-token prediction using 20 billion tokens and global batch size of 256. Unless otherwise stated, context window of 1024 is used as the sequence length. We use Pre-LN \cite{xiong2020layer} instead of Post-LN and use rotary positional encodings (RoPE) \cite{su2024roformer}. We use the Adam optimizer \cite{kingma2014adam}.

\paragraph{Model Parametarization.}
We train models of various sizes, from 50 million parameters up to 750 million parameters. We follow conventions from \cite{esser2024scaling} and set model size with parameter $d$ s.t. $d_{\text{model}} = 64d$ and both number of attention heads and layers correspond to $d$. The precise parameter counts are shown in \Cref{tab:param_count}.

\begin{table}[h]
  \centering
  \caption{Model Parameterization.}
  \label{tab:param_count}
  \begin{tabular}{ccc}
    \toprule
    $d = n_{\text{heads}} = n_{\text{layers}}$ & $d_{\text{model}}$ & Model Parameter Count\\
    \midrule
    8 & 512 & 50,927,104\\
    12 & 768 & 123,607,296\\
    16 & 1024 & 252,890,112\\
    20 & 1280 & 457,726,720\\
    24 & 1536 & 756,969,984\\
    \bottomrule
  \end{tabular}
\end{table}

\paragraph{Runway Aware Rewiring Implementation Details.} 
Here we provide implementation details for our runway-aware rewiring methodology. We re-purpose one of the attention heads to calculate runway-coefficients. The attention head is used for regular MHA as usual. We calculate runway-coefficients once from the re-purposed attention head and use them for all attention heads, instead of calculating runway-coefficients independently for each head. While each attention head typically learns different patterns, we find that this setup results in good performance while achieving memory and computational efficiency. We show pseudo-code for implementing attention with runway-aware rewiring in \Cref{listing:rewiring}.

\section{Experimental Details, Ablation Studies, and Additional Results}\label{sec:additional_experiments}
Here, we provide more details regarding experimental setup as well as some additional results and ablation studies.

\subsection{Reasoning Benchmarks}\label{sec:app_reasoning}
We use 5 reasoning benchmarks to compare performance between standard transformer our rewired transformer. We use the 750 million parameter models trained on standard next-token-prediction task on the C4 dataset and perform zero-shot reasoning on all 6 datasets without finetuning. More information on each benchmark dataset is compiled below.

\paragraph{ARC-Easy/Challenge \cite{clark2018think}.}
AI2 Reasoning Challenge (ARC) are multiple choice science questions, split into easy and challenging variations. ARC-Easy includes questions answerable by simple information retrieval or information co-occurrence methods. On the other hand, ARC-Challenge consists of questions that cannot simply be answered by either information retrieval or information co-occurrence, stressing multi-hop reasoning capabilities. Inputs are short natural language questions with four answer options, and the evaluation metric is normalized accuracy.

\paragraph{HellaSwag \cite{zellers2019hellaswag}.}
HellaSwag is a large-scale common sense benchmark testing the models' abilities to choose plausible continuation of multi-sentence contexts. This task stress-tests procedural knowledge (i.e. "what comes next") in lieu of factual recall. This aspect of HellaSwag distinguishes it from other reasoning benchmarks like the ARC variants and offers evaluation diversity. 

\paragraph{PIQA \cite{bisk2020piqa}.}
Physical Interaction: Question Answering (PIQA) focuses on physical common sense. Each instance presents a shrot goal or description of a physical task and two candidate solutions, and the model must select the physically more plausible option. These scenarios target physical actions like intuitive physics and object usage, so success depends on grounded world knowledge rather than purely linguistic pattern recognition. 

\paragraph{CommonSenseQA \cite{talmor2019commonsenseqa}.}
CommonSenseQA comprises of five-choice questions generated by sampling ConceptNet \cite{speer2017conceptnet} and asking annotators to write questions whose correct answer is the target concept while distractors are other semantically related concepts. The resulting questions emphasize everyday common sense relations such as causes, purposes, locations, and attributes, testing model ability to use structured common sense rather than shallow linguistic pattern recognition.


\subsection{Information Retrieval Analysis.}\label{sec:app_info_retrieval}
We perform zero-shot passkey retrieval on models trained on next-token prediction with sequence length of 1024. Starting in-context at sequence length of 1024, we increase the sequence lengths out-of-context to observe performance degradation and failure points of both standard and our rewired transformers. The needle and retrieval prompt are as follows:

Needle: "The passkey is [passkey \#]."

Retrieval Prompt: "The passkey is"

where [passkey \#] is a random 5-digit passkey to be retrieved. The objective is to correctly output the embedded passkey in the next-token prediction following the retrieval prompt. As mentioned, we count exact matches only, so all 5 digits and only the correct digits in the right order must be retrieved. This puts emphasis on representation sharpness and fidelity by design. The needle is placed at designated sequence depth (where lower depth means closer to the beginning of sequence) and the rest of the sequence is populated with natural language from the C4 dataset.

\subsection{Extrapolation Analysis.}\label{sec:app_extrapolation_analysis}
We perform extrapolation on models trained on next-token prediction with sequence length of 1024, same as the retrieval analysis. Starting in-context at 1024, we increase the sequence lengths up to 2048 and observe standard evaluation loss and extrapolation abilities of both standard and our rewired transformers. 

In \Cref{fig:extrap_multi_small}, we compare the extrapolation abilities of our rewired transformer at 100 million parameters and 250 million parameters against the standard transformer at 250 million parameters. Notably, our rewired transformer at 100 million parameters show better extrapolation performance compared to the standard transformer at 250 million parameters. We observe similar results in \Cref{fig:extrap_multi} comparing standard transformer at 750 million parameters against our rewired transformer at 450 million. Overall, \Cref{fig:extrap_total} shows that standard transformers need around 150 million additional parameters to match our rewired transformers' extrapolation performance and that this persists at different model scales. Intuitively, our runway-aware rewiring can help extrapolation because longer-than-trained contexts create longer runways, so a method that rewires and mitigates progressive runway influences should degrade more gracefully as context length increases.

\begin{figure*}[t]
  \centering
  \begin{subfigure}[t]{0.48\textwidth}
    \centering
    \includegraphics[width=\linewidth]{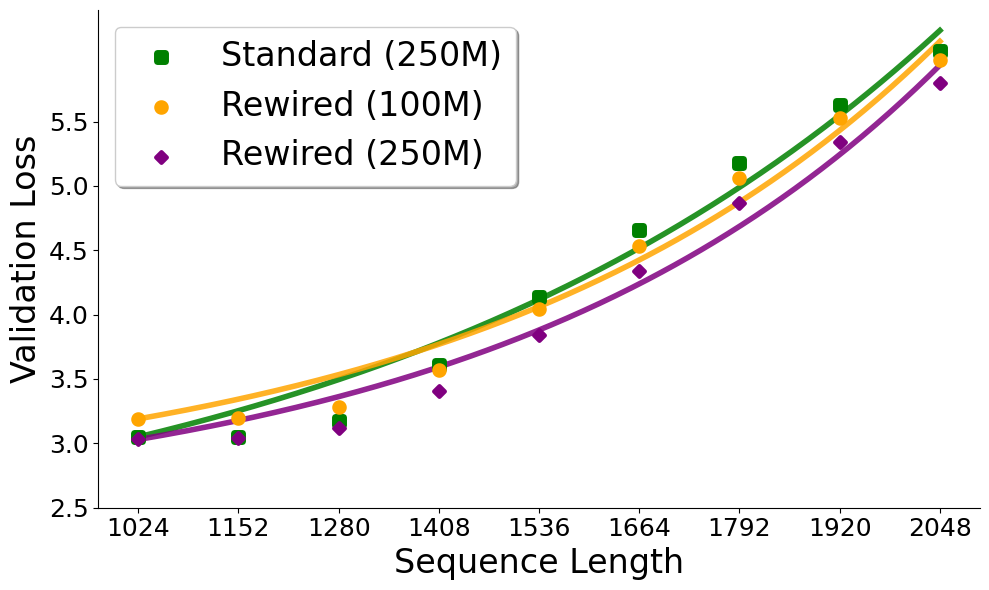}
    \caption{}
    \label{fig:extrap_multi_small}
  \end{subfigure}
  \begin{subfigure}[t]{0.48\textwidth}
    \centering
    \includegraphics[width=\linewidth]{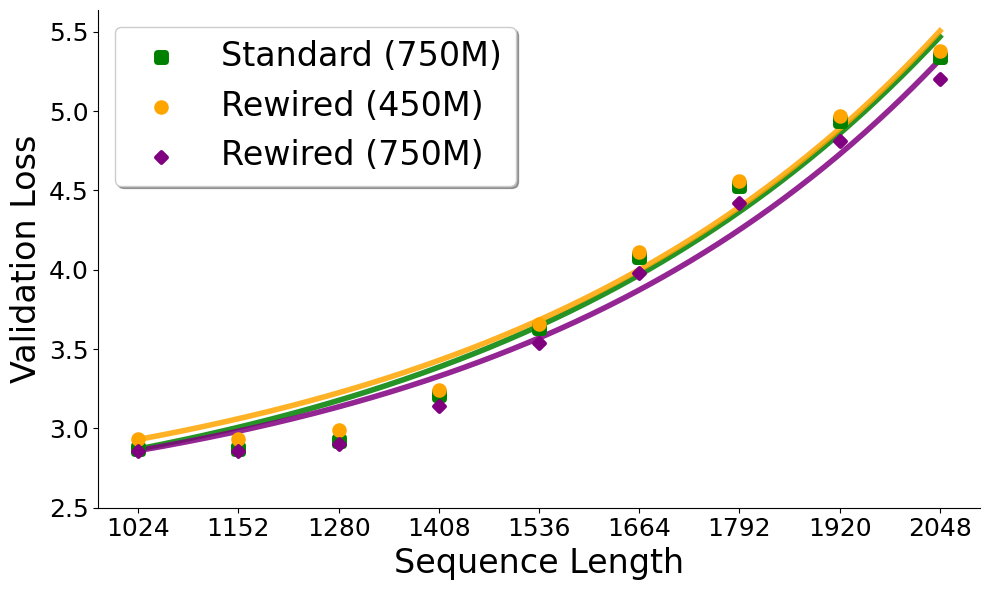}
  \caption{}
  \label{fig:extrap_multi}
  \end{subfigure}\hfill

  \caption{Analysis of extrapolation performance (validation perplexity). Notably, we find that our rewired models show comparable extrapolation performance to standard transformers with 150 million additional parameters.}
  \label{fig:extrap_total}
\end{figure*}

Next, we show additional results in \Cref{fig:extrap_all4}, showing extrapolation comparisons across 4 model sizes (100, 250, 450, and 750 million parameters). As illustrated in \Cref{fig:extrap_all4}, we see that our rewired model generally outperforms standard transformer in extrapolation across model sizes, and that the improvements seem to grow with growing model size. Coupled with findings from \Cref{sec:extrapolation_analysis}, this indicates that our rewired transformer learns effective information routing patterns that translate to longer-than-trained context lengths better than standard transformers, and that this behavior scales favorably with model size. 

\begin{figure}[h]
  \centering

  \begin{subfigure}{0.48\textwidth}
    \centering
    \includegraphics[width=\linewidth]{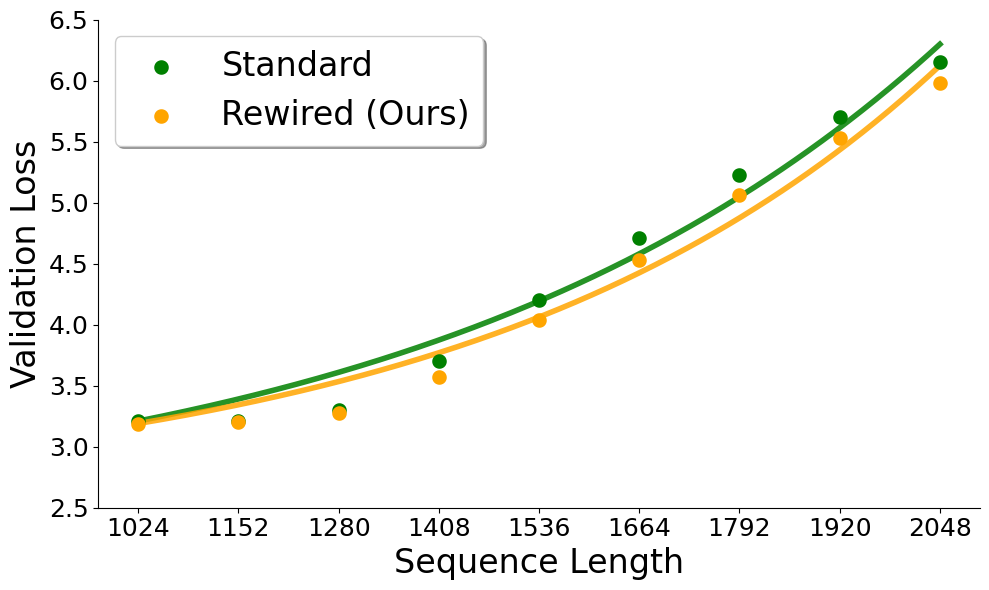}
    \caption{100 million parameter models ($d=12$)}
    \label{fig:extrap_d12}
  \end{subfigure}\hfill
  \begin{subfigure}{0.48\textwidth}
    \centering
    \includegraphics[width=\linewidth]{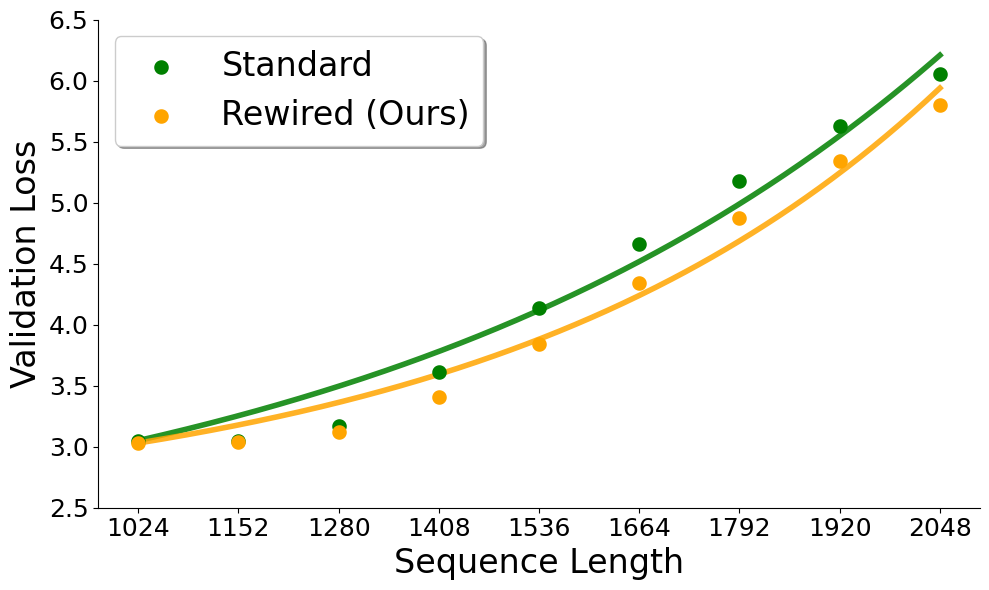}
    \caption{250 million parameter models ($d=16$)}
    \label{fig:extrap_d16}
  \end{subfigure}

  \medskip 

  \begin{subfigure}{0.48\textwidth}
    \centering
    \includegraphics[width=\linewidth]{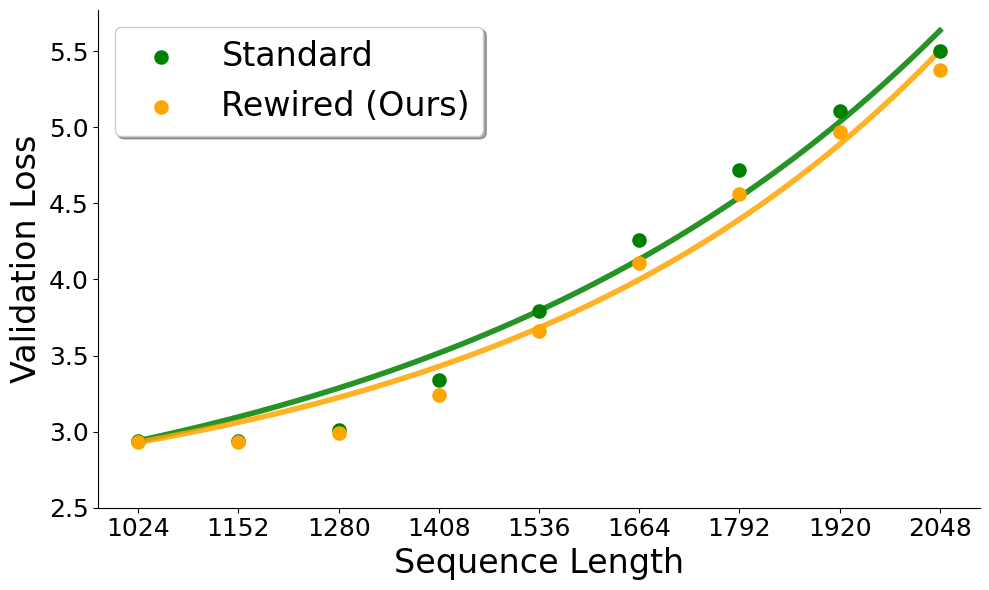}
    \caption{450 million parameter models ($d=20$)}
    \label{fig:extrap_d20}
  \end{subfigure}\hfill
  \begin{subfigure}{0.48\textwidth}
    \centering
    \includegraphics[width=\linewidth]{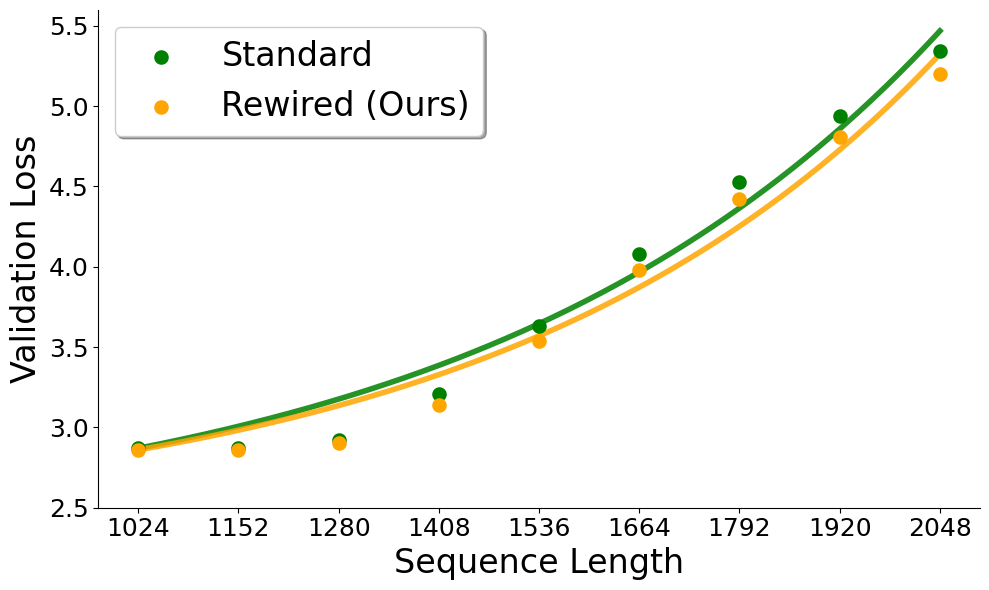}
    \caption{750 million parameter models ($d=24$)}
    \label{fig:extrap_d24}
  \end{subfigure}

  \caption{Extrapolation comparison between standard and rewired transformers at various model sizes: (a) 100 million parameters, (b) 250 million parameters, (c) 450 million parameters, and (d) 750 million parameters.}
  \label{fig:extrap_all4}
\end{figure}

\subsection{Ablation Studies}\label{sec:app_ablation_studies}
We perform ablation studies on runway coefficient calculation methods. Namely, we compare the parameter-free method of using dot-product (used in our experiments) with the learnable bilinear form. When using the bilinear form, the rewired model introduces additional parameters by design. We show its precise parameter count increase in percentage compared to standard transformer at each model size in \Cref{tab:param_count_bilin} when using the bilinear form. Note that even when using the bilinear form, the parameter-count increase is negligible compared to standard transformers.

\begin{table}[h]
  \centering
  \caption{Model Parameterization and total parameter difference (in percent increase) between the rewired transformer using bilinear form and standard transformer at each model size.}
  \label{tab:param_count_bilin}
  \begin{tabular}{ccccc}
    \toprule
    & \multicolumn{3}{c}{Total Parameter Count} \\
    \cmidrule(r){3-4}
    $d$ & $d_{\text{model}}$ & Std. Transformer & Rewired (Bilinear Form) & Parameter $\Delta$\\
    \midrule
    8 & 512 & 50,927,104 & 50,959,872 & +0.06\%\\
    12 & 768 & 123,607,296 & 123,656,448 & +0.04\%\\
    16 & 1024 & 252,890,112 & 252,955,648 & +0.03\%\\
    20 & 1280 & 457,726,720 & 457,808,640 & +0.02\%\\
    24 & 1536 & 756,969,984 & 757,068,288 & +0.01\%\\
    \bottomrule
  \end{tabular}
\end{table}

\Cref{fig:ablation_main_ppl} shows the main ablation comparing validation loss of the bilinear form rewiring (orange circle) versus the dot-product rewiring (purple diamond) methods. Standard transformer performance (green square) is also included for baseline visualization. We see that the bilinear form and dot-product approaches are comparable in performance. 

\begin{figure}[h]
  \centering
  \includegraphics[width=0.6\linewidth]{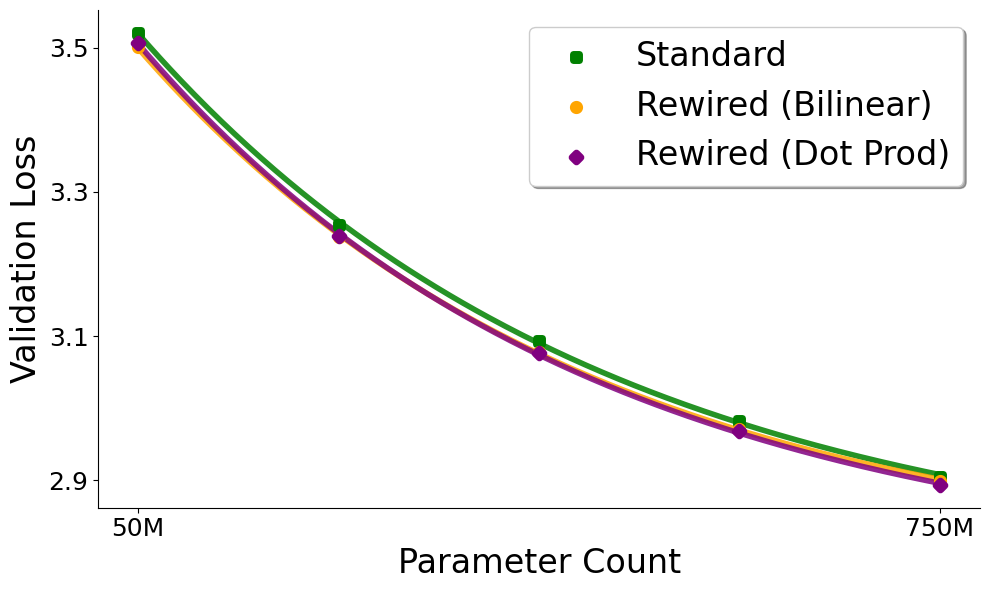}
  \caption{Validation loss comparison at various model sizes between rewired model using dot product (in purple) and bilinear form (in orange). Standard transformer (in green) is included for baseline comparison.}
  \label{fig:ablation_main_ppl}
\end{figure}

\Cref{fig:ablation_seq_len} shows sequence length ablation of both rewired models using dot-product (purple diamond) and bilinear form (orange circle) for runway-coefficient calculation. Standard transformer (in green square) is included for baseline visualization. We see that performance between dot-product and bilinear form methods are comparable, showing that our overall runway-aware rewiring learns approximately equally effective rewiring patterns whether we introduce additional parameters or not.

\begin{figure}[h]
  \centering
  \includegraphics[width=0.6\linewidth]{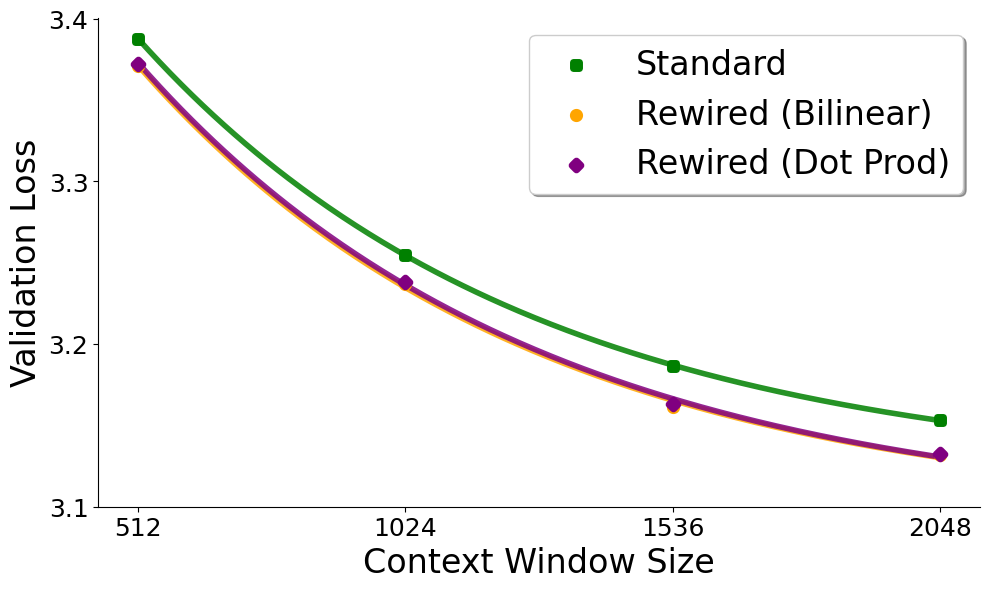}
  \caption{Context window size ablation for rewired transformer with dot product (purple diamond) and bilinear form (orange circle) runway-coefficient calculation. Standard transformer performance (in green square) is also plotted for baseline information.}
  \label{fig:ablation_seq_len}
\end{figure}

\Cref{fig:ablation_extrap} shows extrapolation abilities for transformers with dot-product rewiring compared against bilinear form rewiring. Standard transformer is included for baseline measurements and all models are using 450 million parameters. We see that using dot-product for runway-coefficient calculation is slightly worse than using bilinear form but nonetheless results in superior performance compared to standard transformer. Because this dot-product approach is completely parameter-free, its improvement against standard transformer reinforces the utility of runway-aware rewiring.

\begin{figure}[h]
  \centering
  \includegraphics[width=0.6\linewidth]{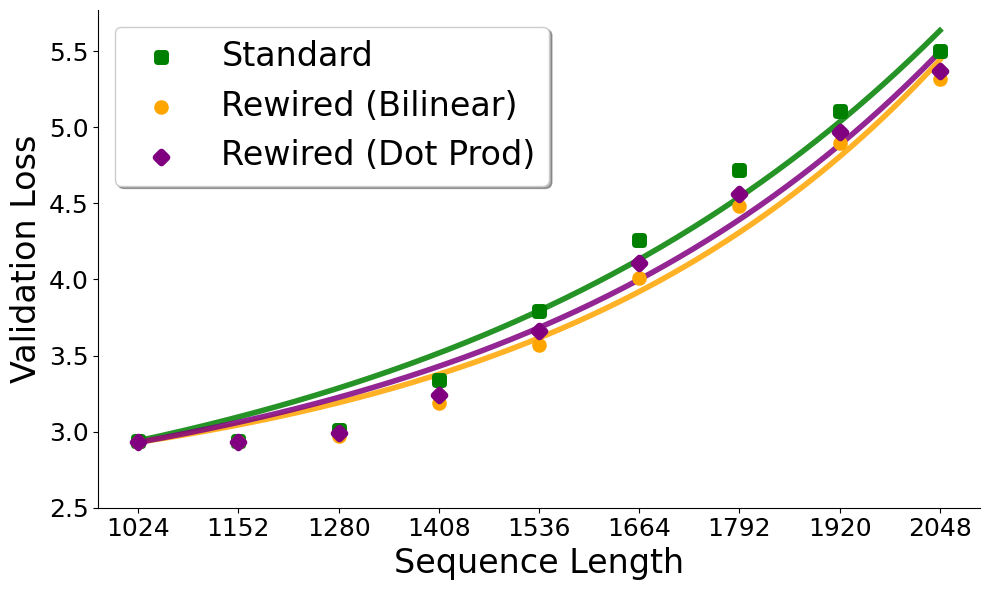}
  \caption{Extrapolation analysis for our rewired transformers using dot product (in purple diamond) and bilinear form (in orange circle). Standard transformer (in green square) is used for baseline visualization. All models are using 450 million parameters.}
  \label{fig:ablation_extrap}
\end{figure}

\section{Rewiring Pattern Analysis}\label{sec:att_pattern_analysis}
Here we conduct analysis over the rewiring behaviors and the overall attention patterns in our 450 million parameter rewired transformer. 

\Cref{fig:att_map_main} visualizes the strength of $\overline{r}$ in the overall attention map. Purple colors denote stronger down-scaling and thus more aggressive rewiring and white and green colors denote weaker down-scaling. Overall, we see that token pairs with longer runways (near the bottom left corner of the matrix) get down-scaled heavily while tokens with shorter runways (near the diagonal) are relatively unaffected by rewiring. Further more, we see that even though token pairs with longer runways typically see stronger down-scaling, the effect is not necessarily uniform and there are many instances where these pairs avoid strong down-scaling (shown in neutral white colors that form a checkered pattern among the purple regions in the bottom left corner). This shows that our runway-aware rewiring is not learning trivial, generalized patterns such as uniform distance-based decay, but rather it can learn to be flexible and choose complex rewiring patterns.

\begin{figure}[h]
  \centering
  \includegraphics[width=0.5\linewidth]{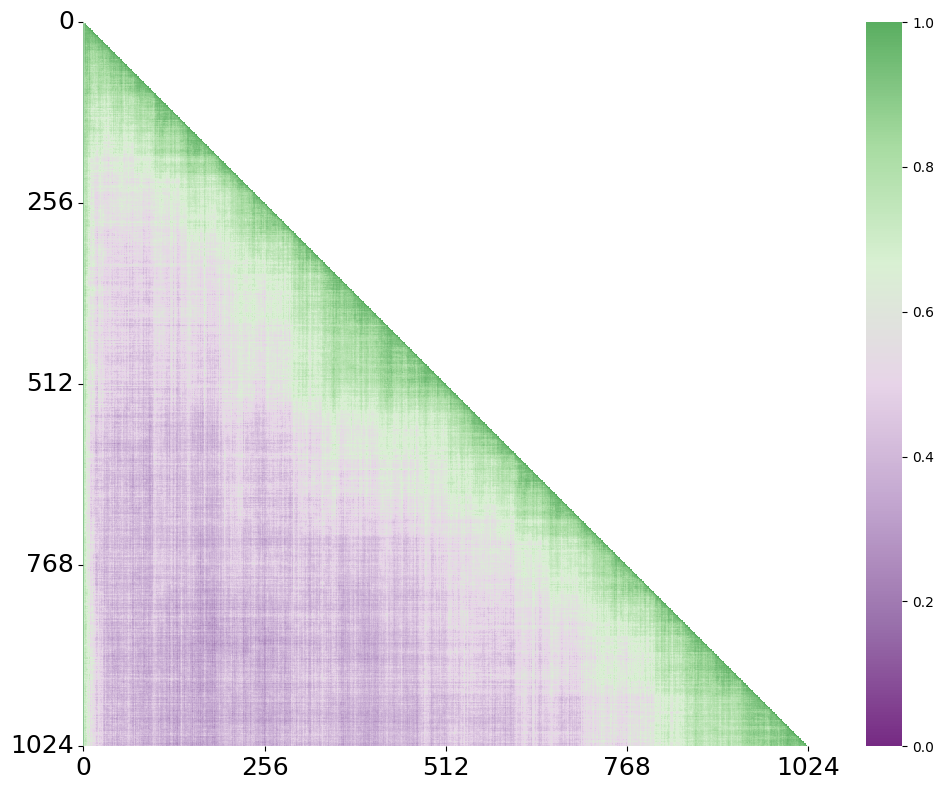}
  \caption{Visualization of the strength of edge down-scaling factor $\overline{r}$ in the overall lower-triangular attention map (averaged over all heads). Lower values (which are more purple) indicate stronger down-scaling and therefore more aggressive rewiring.}
  \label{fig:att_map_main}
\end{figure}

\begin{figure*}[h]
  \centering
  \begin{subfigure}[t]{0.48\textwidth}
    \centering
    \includegraphics[width=\linewidth]{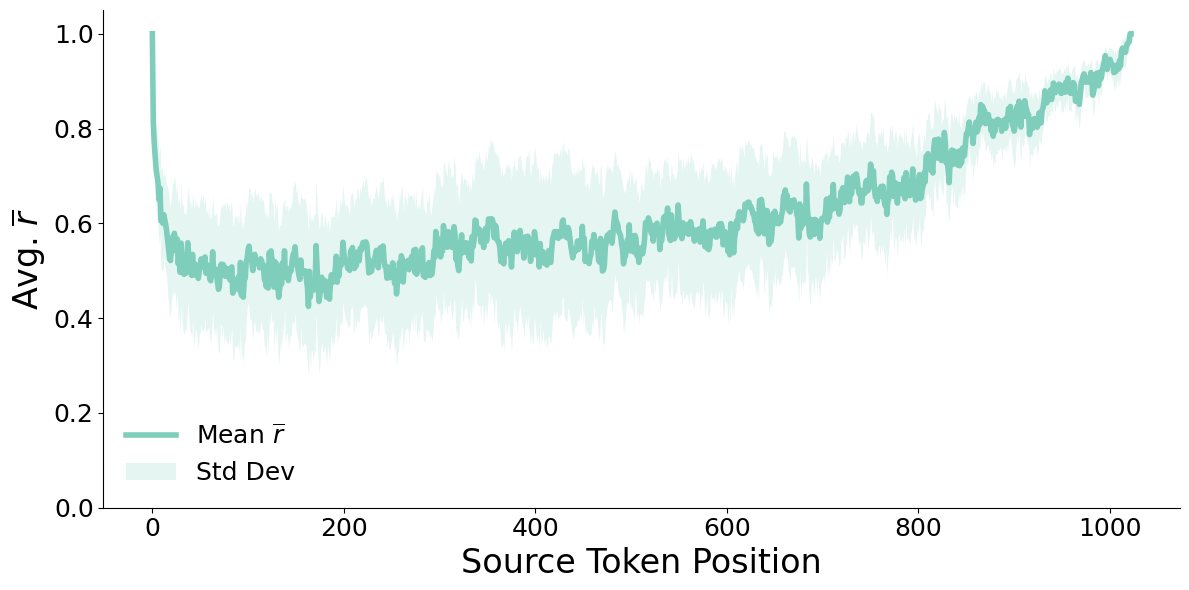}
    \caption{}
    \label{fig:avg_source}
  \end{subfigure}\hfill
  \begin{subfigure}[t]{0.48\textwidth}
    \centering
    \includegraphics[width=\linewidth]{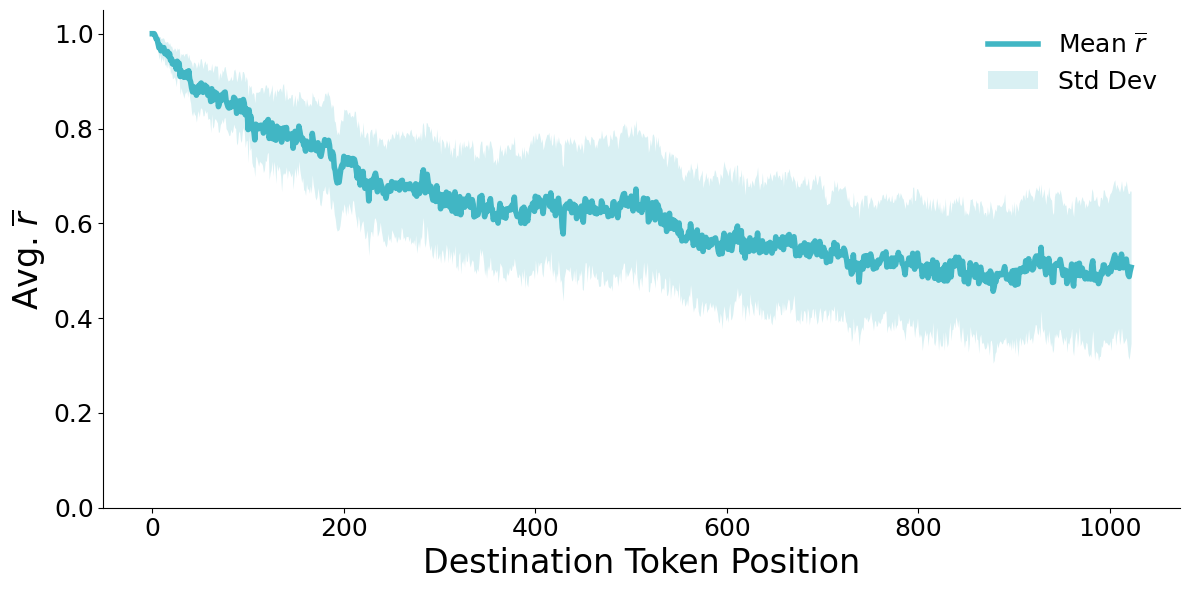}
    \caption{}
    \label{fig:avg_destination}
  \end{subfigure}

  \caption{The average down-scaling effect of $\overline{r}$ with respect to each (a) source token position and (b) destination token position.}
  \label{fig:avg_src_dest}
\end{figure*}

\Cref{fig:avg_src_dest} chart average $\overline{r}$ applied (a) to each source token (\Cref{fig:avg_source}) and (b) by each destination token \Cref{fig:avg_destination}. Lower values indicate stronger down-scaling and thus more aggresive rewiring. The shaded regions illustrate standard deviation and signify diversity of $\overline{r}$ applied to/by each token.

\Cref{fig:avg_source} charts the average $\overline{r}$ applied to each source token by all future tokens. As such, we see an increasing trend across source token positions, with early tokens being down-scaled more heavily than later tokens on average. The shaded region is the standard deviation and shows the diversity of $\overline{r}$ applied to each source token. We see that the standard deviations are quite high for the initial two-thirds of source tokens in the sequence, showing that different future tokens apply different scaling factors to them. This again shows that the rewiring mechanism is not simply down-scaling early tokens in a uniform, distance-based decay but is showing more complex patterns where different future tokens apply noticeably different down-scaling for earlier tokens, even though on average, earlier tokens get down-scaled more as expected. The first few source tokens are not down-scaled as a result of our rewiring methodology and thus shows a spike at the very left end of the chart.

\Cref{fig:avg_destination} charts the average $\overline{r}$ for each destination token's row-wise attention pattern. In other words, this chart represents a horizontal slice through the rewiring pattern matrix, averaging across columns for each row. Thus, this chart shows how much each token down-scales its own attention on average. We see a clean downward pattern across destination token positions, with growing standard deviations. Generally, this means that destination tokens toward the tail-end of the sequence apply stronger and more diverse rewiring, which makes sense given that later tokens have to work with longer runways.

\clearpage
\begin{lstlisting}[float, floatplacement=tp, language=Python, caption={Attention with Runway Aware Rewiring (Pseudo code)},label={listing:rewiring}]
def runway_rewired_attention(x, is_causal=True):
    Q, K, V = project_qkv(x)  # each: [batch, n_heads, seq_len, head_dim]
    
    Q, K = RoPE(Q, K)

    # standard attention logic
    logits = (Q @ K.T) / sqrt(head_dim)
    
    i = torch.arange(seq_len).view(1,1,seq_len,1)
    j = torch.arange(seq_len).view(1,1,1,seq_len)
    
    causal_mask = (j <= i)                              # standard causality
    keep_adjacent = (j == i) | (j == i-1) | (j == 0)    # keep self, prev, first token
    eligible_for_rewiring = (j <= i-2)                   
    
    # extract last head to re-purpose for runway coefficients
    V_last = V[:, -1:, :, :]  # [batch, 1, seq_len, head_dim]
    
    # get V_{i-1} for each position i
    V_prev = V_last[:, :, prev_indices, :]  # [batch, 1, seq_len, head_dim]
    
    # runway-coefficient calculation
    R_scores = v_prev @ v_last.T
    R_scores = R_scores / sqrt(self.head_dim)
    
    R = sigmoid(R_scores)  # runway-coefficient in (0, 1)
    
    beta = 1 - R  # higher beta = lower R
    
    # Override: always keep adjacent edges (no rewiring)
    beta = where(keep_adjacent, 1.0, beta)
    
    # Only apply rewiring to eligible edges
    beta = where(eligible_for_rewiring, beta, 1.0)
    
    # Broadcast to all heads
    beta = beta.expand(batch, n_heads, seq_len, seq_len)
    
    # standard attention logic
    masked_logits = logits.masked_fill(~causal_mask, -inf)
    attn = softmax(masked_logits, dim=-1)
    
    # applying rewiring
    attn_rewired = attn * beta

    # re-normalizing attn_rewired to sum up to 1.0 in each row
    attn_rewired = attn_rewired / attn_rewired.sum(dim=-1)
    
    # standard attention logic
    out = attn_rewired @ V  # [batch, n_heads, seq_len, head_dim]
    out = concat_heads(out)   # [batch, seq_len, d_model]
    out = project_out(out)
    
    return out
\end{lstlisting}

\end{document}